%% file: main.tex
\documentclass[letterpaper]{article}
\usepackage[preprint]{aaai2027} 
\usepackage[hyphens]{url} 
\usepackage{graphicx} 
\usepackage{natbib}  
\usepackage{caption}  
\usepackage{booktabs}

\usepackage{amsmath}

\DeclareMathOperator*{\argmin}{arg\,min}
\DeclareMathOperator*{\Regret}{reg}
\usepackage{amsfonts}
\usepackage{amssymb}
\usepackage{mathtools}

\usepackage{algorithm}
\usepackage[noend]{algpseudocode}

\definecolor{policysearchcolor}{HTML}{B35C1E}
\newcommand{\algline}[2]{l.~\ref{#2}}
\newcommand{\no}{{\color{Red}\ding{55}}}
\newcommand{\yes}{{\color{Green}\ding{51}}}

\usepackage[most]{tcolorbox}
\usepackage{amsthm}
\usepackage{thm-restate}

\usepackage[dvipsnames]{xcolor}
\usepackage{tikz}
\usetikzlibrary{patterns,arrows.meta}

\usepackage{nicefrac}
\usepackage{soul}
\usepackage{scalerel}
\usepackage{tikzsymbols}

\usepackage[capitalize,noabbrev]{cleveref}
\crefname{algorithm}{Alg.}{Alg.}

\theoremstyle{plain}

\newtheorem{lemma}{Lemma}

\theoremstyle{definition}

\newtheorem{problem}{Problem}

\newtheorem{example}{Example}

\usepackage{enumitem}

\newtcolorbox{problembox}{
    colback=white,
    colframe=black,
    boxrule=0.6pt,
    arc=0pt,
    left=6pt,
    right=6pt,
    top=6pt,
    bottom=6pt
}

\usepackage{xcolor}

\usepackage{pifont}

\title{Optimizing Minimax Regret in Uncertain MDPs with Small Sets of Policies}
\author{Sterre Lutz, Daniël Vos, Matthijs T.J. Spaan, Anna Lukina}
\affiliations{Delft University of Technology\\\{s.lutz, d.a.vos, m.t.j.spaan, a.lukina\}@tudelft.nl}
\begin{document}

\newcommand*{\MDP}{M}
\newcommand*{\Family}{\mathcal{M}}
\newcommand*{\States}{S}
\newcommand*{\Actions}{A}
\newcommand*{\TransitionFun}{T}
\newcommand*{\RewardFun}{R}
\newcommand*{\InitialDistr}{\mu}
\newcommand*{\IDFunction}{I}
\newcommand*{\InfoSpace}{\mathcal{I}}
\newcommand*{\Policy}{\pi}
\newcommand*{\Mapping}{\delta}
\newcommand{\Performance}{V}
\newcommand{\Cluster}{\Family}
\newcommand{\Partition}{\mathcal{P}}

\maketitle
\begin{abstract}
Sequential decision-making in real-world applications often involves uncertainty about the environment's model. Uncertain Markov decision processes (UMDPs) represent the possible environments as a set of MDPs with shared states and actions but potentially different transition probabilities and rewards. Optimizing a single policy across all possible MDPs may sacrifice performance, while preparing an individually optimized policy for every MDP may violate operational, regulatory, or interpretability constraints on the number of policies that can be prepared and deployed. We consider settings in which model uncertainty is resolved shortly before execution, allowing the most suitable policy to be selected from a limited set prepared in advance. We introduce $k$-adaptable policy synthesis, which optimizes such a set of $k$ policies under a minimax-regret objective. We prove that the problem is NP-hard and develop KAPS, an exact nested branch-and-bound algorithm with problem-specific bounds and heuristics. KAPS jointly optimizes which MDPs share a policy and the policies themselves. Experiments across various UMDP benchmarks show that the largest reduction in regret consistently occurs when increasing from one to two policies. In the single-policy setting, KAPS is competitive with existing methods in solution quality and proves optimality substantially more often.
\end{abstract}

\input{sections/introduction}
\input{sections/related-work}
\input{sections/problem-definition}
\input{sections/method}
\input{sections/experiments}
\input{sections/conclusion}

\bibliography{sources}

\clearpage

\appendix
\input{appendices/proofs/np-hardness}
\input{appendices/proofs/partition-equivalence}
\input{appendices/proofs/correctness-kaps}
\input{appendices/benchmark-details}
\input{appendices/extra_results}
\end{document}

%% file: sections/introduction.tex
\section{Introduction}
\label{sec:introduction}

Sequential decision-making often takes place under uncertainty about how an environment will behave. A decision maker must choose a strategy of consecutive actions---a \emph{policy}---whose consequences depend on stochastic transitions and rewards. In many applications, there is an additional layer of \emph{model uncertainty}: several probabilistic models of the environment are considered plausible, but it is initially unknown which model describes the true environment for the policy to be executed.

Markov decision processes (MDPs) are a standard modeling formalism for sequential decision-making under uncertainty~\cite{puterman2014markov}. An MDP consists of states connected by stochastic transitions: an agent selects an action, after which the environment samples the next state according to a probability distribution conditioned on the current state and action. The agent receives rewards and seeks a policy that maps states to actions and maximizes the expected cumulative discounted reward. In comparison, an uncertain MDP (UMDP) can represent the additional layer of the environment's model uncertainty by specifying a finite set of candidate MDPs, each describing a possible environment realization~\cite {xu2009parametric}.

\begin{figure}
    \centering
    \includegraphics[width=\linewidth]{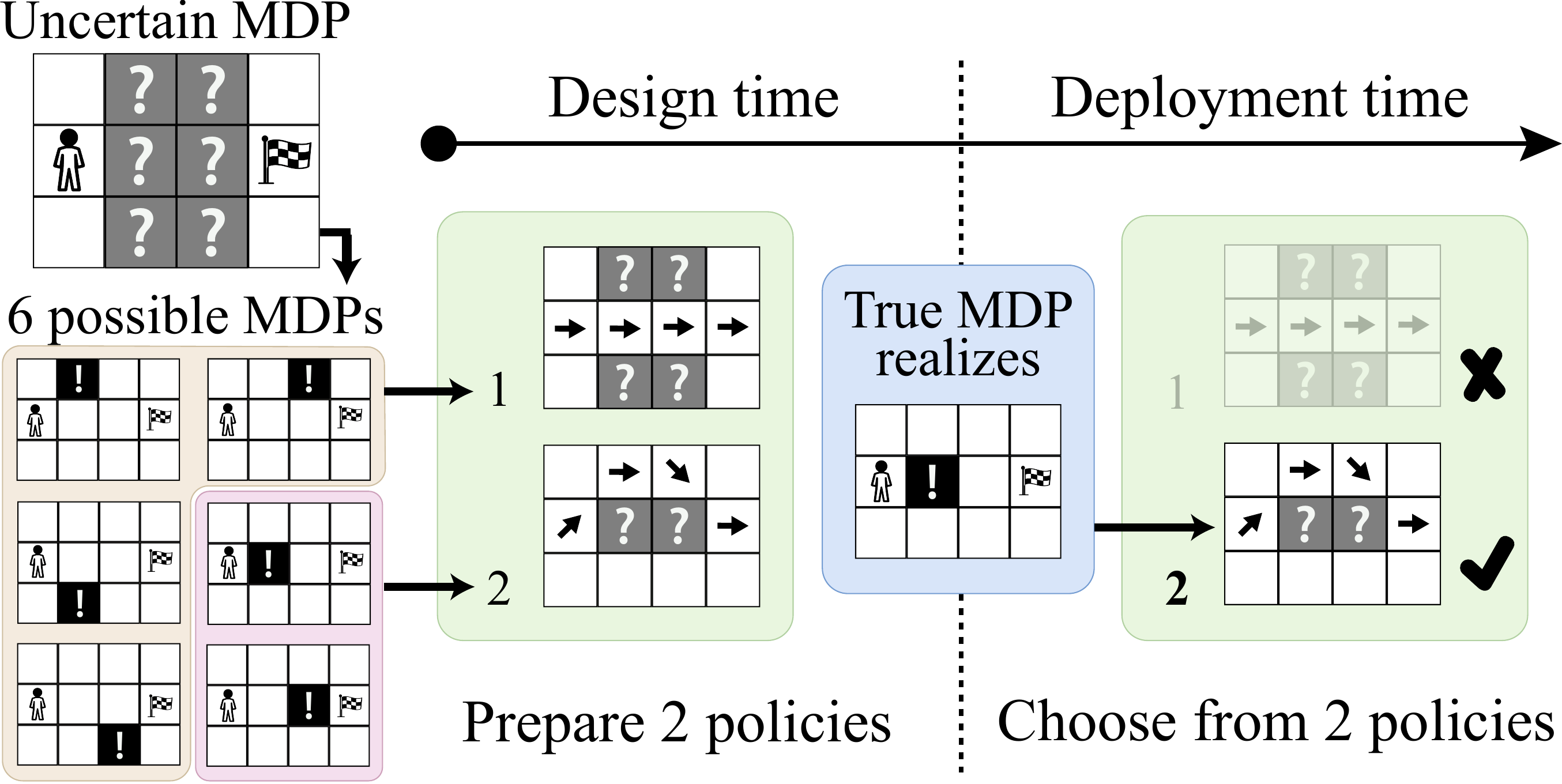}
    \caption{A $k$-adaptable policy for $k = 2.$ The uncertain MDP induces a set of possible MDPs (here "?" marks a possible obstacle location, "!" the true one; in general, the set varies in transition probabilities and rewards). At design time, two policies are prepared. Shortly before deployment, the realized environment is identified and the best-suited prepared policy is selected. A single policy may need to compromise between environments, whereas preparing one policy for every environment may be operationally infeasible.}
    \label{fig:illustration}
\end{figure}

We consider settings in which the model uncertainty is resolved shortly before deployment. Once identified, the realized environment could be handled by an optimal policy. In practice, however, policies may need to be designed, communicated, validated, or approved well in advance. Operational, regulatory, and interpretability constraints can therefore limit the number of distinct policies that can be prepared. The challenge is to prepare a small number of policies to select from, after
uncertainty resolves. \cref{ex:disaster} motivates our setting based on one of the benchmark problems we considered.

\begin{example}[Disaster Rescue]
A rescue team may prepare for several possible layouts of an area affected by a natural disaster, differing in which routes are blocked or which regions are difficult to traverse. An on-site assessment reveals the relevant details before the team begins its mission. Preparing one strategy per layout may be infeasible because instructions must be communicated, responders trained, and protocols validated or approved. The objective is a small policy set that performs well across all layouts. \Cref{fig:illustration} illustrates this setting.\label{ex:disaster}
\end{example}

Similar restrictions arise in healthcare, where each treatment protocol for different patient types may require clinical validation, and in maintenance scheduling, where deploying a policy can require investments or operational preparations to be made in advance. More generally, the objective arises in any application where model uncertainty resolves at deployment time, but preparing policies is costly.

Robust MDP policy synthesis generally assumes a conceptually different problem setting: model uncertainty does not resolve at deployment, and rather one policy is constructed for the whole UMDP~\cite{iyengar2005robust,el2005robust}. It has recently seen many efficient approaches~\cite{wolff2012robust,abateBestEffortPoliciesRobust2025,meggendorfer2025solving,schnitzer2026efficient}. These methods have primarily adopted a maximin-value objective, computing a single policy that maximizes performance against the worst-case MDP in the set. Minimax-regret approaches \citep{ahmedRegretBasedRobust2013,ahmedSamplingBasedApproaches2017,DBLP:conf/aaai/RigterLH21} instead measure a policy's loss relative to the environment-specific optimum, yielding less conservative solutions~\cite{regan2011robust,xu2009parametric}. In contrast, we formulate \emph{k-adaptable problem setting}, in which the model uncertainty resolves shortly before deployment, and extend the minimax-regret optimization to the \emph{$k$-adaptable policy synthesis}, which prepares $k$ policies for selection.

This gap between single and one-per-model solutions has been studied outside the MDP setting: work on finite adaptability and $k$-adaptability in robust optimization~\cite{bertsimasFiniteAdaptabilityMultistage2010,malaguti_k-adaptability_2022} shows that committing to a small fixed set of $k$ solutions before the uncertainty is resolved can close most of the gap to full adaptability at a fraction of the complexity. To our knowledge, no existing approach brings this idea to the general UMDP setting.

In this paper, we introduce the problem of $k$-adaptable minimax-regret policy synthesis for UMDPs, prove that it is NP-hard, and propose KAPS, a specialized exact branch-and-bound algorithm for solving it. KAPS jointly searches for a partition of the UMDP's underlying MDPs and one shared policy per subset, minimizing worst-case regret after the realized MDP is revealed. Problem-specific bounds and heuristics substantially reduce the resulting search space.

In the single-policy setting ($k=1$), KAPS consistently outperforms state-of-the-art UMDP planning approaches based on approximate dynamic programming~\cite{DBLP:conf/aaai/RigterLH21} and mixed-integer linear programming~\cite{ahmedRegretBasedRobust2013}. In the $k>1$ setting, across all benchmarks, most of the benefit is obtained from the first few policies: increasing from one to two policies yields the largest reduction in regret, often reducing it to near zero. This shows that even a small amount of adaptability can capture much of the value of tailoring a policy to the realized MDP.

In summary, our contributions are: (i) a formal definition of the $k$-adaptable minimax-regret policy synthesis problem and a proof of its NP-hardness;  (ii) the first exact algorithm to solve it; and (iii) an experimental evaluation on a range of UMDP benchmarks.

%% file: sections/related-work.tex
\section{Related Work}
\label{sec:related-work}
Although our work relates to several approaches to decision-making under model uncertainty, none directly applies to our problem setting. We discuss the key differences below.

\subsubsection{Minimax Regret in Uncertain MDPs}
Prior work on minimax regret in UMDPs focuses on synthesizing a single policy. \citet{ahmedRegretBasedRobust2013,ahmedSamplingBasedApproaches2017} propose an exact MILP formulation and scalable approximations, including Cumulative Expected Myopic Regret (CEMR). For stochastic-shortest-path UMDPs, \citet{DBLP:conf/aaai/RigterLH21} develop minimax value iteration and \(n\)-step approximations for dependent uncertainty; we refer to its three-step variant as cumulative regret (\textsc{CREG}) in the experimental evaluation. These methods provide relevant comparisons for single-policy synthesis, but do not address the synthesis of multiple policies.

\subsubsection{Finite Adaptability and Min--Max--Min Optimization}
Outside the MDP setting, finite adaptability restricts fully adaptive robust optimization to finitely many contingency plans \citep{bertsimasFiniteAdaptabilityMultistage2010}. In two-stage $k$-adaptability, $k$ candidate recourse decisions are prepared together with a common first-stage decision \citep{hanasusanto_k_2015, subramanyam_k-adaptability_2020}. Unlike our setting, this commits to a partial solution shared by all candidates. More directly related formulations prepare $k$ complete solutions and select one after uncertainty is revealed, optimizing either worst-case cost \citep{buchheim_minmaxmin_2017} or expected cost \citep{malaguti_k-adaptability_2022}. None of the discussed approaches address the synthesis of policies for UMDPs.

\subsubsection{Policy Synthesis for Sets of MDPs}
\citet{andriushchenko_policies_2025} construct policy trees that partition a set of MDPs (which they call a family) and assign each subset a common policy satisfying a reachability threshold, while identifying MDPs for which no such policy exists. Their abstraction-guided divide-and-conquer method recursively splits the MDPs without a prescribed limit on the resulting number of policies; compactness is promoted through splitting heuristics and post-processing rather than optimized exactly. In contrast, we prescribe the number $k$ of policies and search over alternative size-$k$ partitions to minimize worst-case regret over all MDPs.

\subsubsection{Adaptive Policies During Execution} 
Related work also considers settings in which the realized MDP is initially hidden and policies adapt during execution as observations reveal information about the true MDP. This includes memoryful or belief-based policies for UMDPs \citep{sharma2019robust} and multiple-environment MDPs \citep{raskin_multiple-environment_2014, bordais_multi-environment_2026}. In contrast, we assume that the realized MDP is identified before execution and select one of $k$ stationary memoryless policies. This keeps $k$ interpretable as a bound on the number of prepared state-dependent decision rules, rather than allowing a single policy to identify the MDP online and subsequently specialize its behavior further. \citet{luque-cerpaLearningContextualRuntime2026} instead take an existing ensemble of controllers and learn a contextual runtime monitor that continuously observes the operating context and selects a controller subject to safety requirements. We jointly synthesize the candidate policies to be selected from after uncertainty is resolved, rather than learning a runtime selector for a given set of policies.

%% file: sections/problem-definition.tex
\section{Preliminaries}
\label{sec:prelims}

We introduce the definitions needed to compare a policy across multiple possible MDPs. In particular, we define uncertain MDPs and evaluate policies by their worst-case loss relative to the optimal policy for the realized MDP.

\subsubsection{Markov Decision Processes (MDPs)}
A (discounted) Markov decision process is a tuple
\(
\MDP = (\States, \Actions, \TransitionFun, \RewardFun, \gamma, \InitialDistr),
\)
where $\States$ is a finite set of states, $\Actions$ is a finite set of actions, $\TransitionFun(\,\cdot \mid s,a)\in\Delta(\States)$ is the transition distribution for $(s,a)$, $\RewardFun : \States\times\Actions\times\States \to \mathbb{R}$ is a reward function, the scalar $\gamma\in(0,1)$ is the discount factor, and $\InitialDistr\in\Delta(\States)$ the initial distribution.

\subsubsection{Policies} A deterministic, memoryless policy is a function $\Policy:S\rightarrow A$ that selects an action based only on the current state. Its value in $\MDP$ is defined as the expected discounted return
$
    \Performance(\MDP, \Policy) = \mathbb{E}_{\MDP,\Policy}
    \left[
        \sum_{t=0}^{\infty}\gamma^t\RewardFun(s_t,\Policy(s_t),s_{t+1})
    \right],
$
where $s_0\sim\InitialDistr$ and $s_{t+1}\sim\TransitionFun(\cdot\mid s_t,\Policy(s_t))$. The optimal value of $\MDP$ is denoted by $\Performance^\star(\MDP)$ and a policy achieving it by $\Policy^\star$. 

\subsubsection{Uncertain MDPs} 
An uncertain MDP (UMDP) 
\(
\Family=\{\MDP_1,\dots,\MDP_n\}
\) 
is a finite set of MDPs sharing the state space $\States$, action space $\Actions$, and discount factor $\gamma$. The MDPs may differ in their transition kernels $\TransitionFun_i$, reward functions $\RewardFun_i$, and initial state distributions $\InitialDistr_i$. Each $\MDP_i\in\Family$ represents one possible realization of the environment. 

\subsubsection{Regret}
An optimal policy for one MDP may perform poorly for another. For an MDP $\MDP$, the regret of using policy $\Policy$ is the loss in value from the optimal policy for that MDP: $\Regret(\MDP,\Policy)\coloneqq\Performance^\star(\MDP)-\Performance(\MDP,\Policy).$ The regret of $\Policy$ on a UMDP $\Family$ is the worst-case regret across its MDPs. The minimum such worst-case regret is called \emph{minimax regret}, and a policy that achieves it a \emph{minimax-regret policy} 
\begin{equation}
\Policy^{\Regret}\in\argmin_{\Policy}\max_{\MDP\in\Family}(\Performance^\star(\MDP)-\Performance(\MDP,\Policy)).
\end{equation} 
In this paper, we restrict all synthesized policies to be deterministic and memoryless. We motivate this modeling choice in \cref{sec:problem}.

\section{$k$-Adaptable Policy Synthesis}
\label{sec:problem}
We consider settings where the realized MDP is unknown during synthesis but
revealed before execution. The decision maker can therefore prepare multiple policies in advance and select one \textit{after} the uncertainty has resolved. At the extreme, this means one could prepare a dedicated policy for each scenario. While this achieves optimal performance in every scenario, a large policy set may be impractical: each policy may need to be validated or approved, its possible behavior communicated to affected users, or its execution practiced by human operators. We therefore ask: \textit{how do we minimize worst-case regret with only $k$ prepared policies?} We formalize this as follows.

\begin{problembox}
\begin{problem}[$k$-Adaptable Policy Synthesis]\label{problem:synthesis}
Given a UMDP $\Family$ and an integer $k>0$, find a set of deterministic, memoryless policies $\{\Policy_1,\dots,\Policy_k\}$ that minimizes worst-case regret relative to the optimal value:
\begin{align}
\argmin_{\Policy_1,\dots,\Policy_k}
\: &
\max_{\MDP \in \Family} 
\:
\min_{i\in\{1,\dots,k\}}
\left(
\Performance^\star(\MDP)
-
\Performance(\MDP,\Policy_i)
\right).
\end{align}
\end{problem}
\end{problembox}

We restrict the candidate policies to be deterministic and memoryless. The memoryless restriction is central to the intended meaning of $k$: a history-dependent policy could use observed transitions to infer which MDP is active and subsequently branch into MDP-specific behavior, effectively encoding multiple policies in a single candidate. 

The determinism restriction is primarily computational. It yields a finite policy space that can be searched exactly by \textsc{KAPS}, aligns our setting with much of the existing UMDP policy-synthesis literature, and appears mild on our benchmarks, where deterministic policy sets already attain very low, and often zero, regret. Randomized policies may nevertheless improve performance on other instances, and extending \textsc{KAPS} to them remains future work.

\subsubsection{Complexity}
Whereas a single discounted MDP can be solved in polynomial time,
$k$-adaptable policy synthesis is NP-hard even for $k=1$ (proof in supplementary material).
\begin{restatable}[NP-hardness]{theorem}{nphardness}
\label{thm:np-hardness}
Solving $k$-adaptable policy synthesis (\cref{problem:synthesis}) is NP-hard, even for $k=1$.
\end{restatable}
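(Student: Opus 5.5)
We reduce from 3-SAT to the decision version of the problem with $k=1$: ``is there a deterministic memoryless policy $\Policy$ with $\max_{\MDP\in\Family}\Regret(\MDP,\Policy)\le 0$?'' (equivalently, a policy simultaneously optimal in every MDP of $\Family$). NP-hardness of this threshold question for $k=1$ implies NP-hardness of \cref{problem:synthesis} for $k=1$. For any fixed $k$ it also implies hardness, by padding the UMDP with $k-1$ extra MDPs whose optimal behavior is incompatible with everything else; we treat this only as a remark, since the statement requires just $k=1$.

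\textbf{Construction.} Let $\varphi=C_1\wedge\dots\wedge C_m$ be a formula over variables $x_1,\dots,x_n$. Build a shared state space with an initial (dummy) state, one \emph{variable state} $s_j$ per variable, and absorbing sink states $\top$ (reward $1$) and $\bot$ (reward $0$). Each $s_j$ has two actions, $\mathsf{true}$ and $\mathsf{false}$; the determinism and memorylessness of $\Policy$ make $\Policy$ restricted to $\{s_1,\dots,s_n\}$ exactly a truth assignment. For each clause $C_i$ define an MDP $\MDP_i$ whose initial distribution is uniform over the variable states of the three literals in $C_i$. In $\MDP_i$, the action at $s_j$ that satisfies the literal of $x_j$ occurring in $C_i$ leads to $\top$; the other action, as well as both actions at variables not appearing in $C_i$, leads to $\bot$. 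All MDPs share $\States$, $\Actions$ and $\gamma$, and differ only in $\TransitionFun_i$ and $\InitialDistr_i$, as the UMDP definition allows. The construction has size polynomial in $|\varphi|$.

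\textbf{Correctness.} In $\MDP_i$, an optimal policy satisfies every literal of $C_i$ at once, so $\Performance^\star(\MDP_i)=\gamma$. This is the clean way to get a tight threshold: with the uniform initial distribution, the value equals $\tfrac{\gamma}{3}$ times the number of satisfied literals. Choosing instead the threshold ``regret $<\tfrac{2\gamma}{3}$ in every $\MDP_i$'' gives exactly ``at least one literal of $C_i$ is true.'' So there is a single policy with worst-case regret at most $\tfrac{2\gamma}{3}$ iff $\varphi$ is satisfiable:
\begin{itemize}
\item[$(\Rightarrow)$] A policy with worst-case regret at most $\tfrac{2\gamma}{3}$ sends at least one literal state of each $C_i$ to $\top$. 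The induced assignment is well defined because each variable state has a single action, and it satisfies every clause.
\item[$(\Leftarrow)$] A satisfying assignment, read as a policy, gives regret at most $\gamma-\tfrac{\gamma}{3}=\tfrac{2\gamma}{3}$ in every $\MDP_i$.
\end{itemize}
Note that the regret-0 version corresponds to forcing all literals true, which is not SAT. We therefore use the threshold $\tfrac{2\gamma}{3}$ rather than $0$.

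\textbf{Main obstacle.} The delicate step is making sure that per-MDP optimal values are independent of the policy's choices elsewhere, so that regret decomposes exactly as counted above. The tool for this is the absorbing sinks and one-step structure. The second point is that consistency across MDPs comes \emph{only} from the policy being shared and memoryless. If randomized policies were allowed, mixing $\tfrac12$/$\tfrac12$ would give regret $\tfrac{\gamma}{2}<\tfrac{2\gamma}{3}$ trivially. The determinism restriction is therefore essential, and we will note this explicitly. Finally, the minimal-regret value of the optimization problem decides the threshold question, giving NP-hardness.
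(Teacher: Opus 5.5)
Your reduction is correct, but it uses a different gadget from the paper. Both proofs build one MDP per clause over shared variable states whose actions encode the truth assignment. The paper, however, threads the variable states into a chain starting at $s_1$. In $M_j$ the run jumps to an absorbing $s_{\mathrm{sat}}$ at the first variable whose action satisfies a literal of $C_j$, and otherwise falls through to $s_{\mathrm{unsat}}$. So the value computes an OR of the literals, and the threshold is $0$: the paper shows that already deciding whether one policy is simultaneously optimal in all MDPs is NP-hard.

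Your one-step gadget spreads the initial distribution over the clause's literal states, so it computes a count instead. That is why you need the threshold $\tfrac{2}{3}$ of the optimum. What this buys you is immunity to discounting, because all your rewards arrive at the same time step. The chain instead collects its reward at step $i-1$ when leaving $s_i$, so for $\gamma<1$ the paper's claim $V_j^\pi(s_1)=1$ needs the reward for entering $s_{\mathrm{sat}}$ from $s_i$ rescaled to $\gamma^{-(i-1)}$. As literally written, the paper's own example $(a\vee b\vee c)\wedge(\neg a\vee\neg b\vee d)$ has optimal regret $1-\gamma>0$ although it is satisfiable, and your construction has no such issue. What it costs you is robustness to randomization, as you note yourself. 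The repaired chain keeps that robustness: zero regret there forces some literal of every clause to be played with probability $1$.

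A few cleanups. Your first paragraph still announces a reduction to the question ``regret $\le 0$'', which contradicts the threshold you actually use. The value $V^\star(M_i)=\gamma$ depends on an unstated reward convention; with reward $1$ on entering $\top$ it is $1$. Only proportionality to the number of satisfied literals matters, so the argument is unaffected. You should assume each clause has three distinct variables, so that the literal-to-action map in $M_i$ is well defined. The unused dummy state and the padding remark for general $k$ can be dropped.
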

\begin{proof}[Proof sketch]
We reduce 3-SAT to \cref{problem:synthesis} by constructing a UMDP that admits a zero-regret policy if and only if the given formula is satisfiable. 
\end{proof}
Thus, the computational difficulty does not arise only from assigning MDPs to multiple policies: even finding a single policy that minimizes worst-case regret across multiple MDPs is hard. This motivates the problem-specific exact search we describe in \cref{sec:method}.

%% file: sections/method.tex
\section{Algorithm for $k$-Adaptable Policy Synthesis}
\label{sec:method}

In this section, we introduce \textsc{KAPS}, an exact algorithm for
$k$-adaptable policy synthesis. \textsc{KAPS} relies on a partition-based
formulation that jointly determines which MDPs share a policy and
synthesizes one minimax-regret policy for each resulting subset. We first
show that solving this formulation yields an optimal solution to
$k$-adaptable policy synthesis. We then describe the nested
branch-and-bound searches used by \textsc{KAPS}, followed by the
problem-specific bounds and heuristics, and establish the correctness of
the algorithm.

\subsection{Partition-Based Formulation}
To solve \cref{problem:synthesis} exactly, we propose \textsc{KAPS}. As in $k$-adaptability approaches in optimization, assigning realizations to $k$ prepared solutions induces
a partition of the uncertainty set~\citep{bertsimasFiniteAdaptabilityMultistage2010}. Here, each subset in such a partition contains the MDPs selecting the same policy. These MDPs together form a smaller UMDP whose policy must be synthesized jointly.

This representation makes explicit the two coupled choices in \cref{problem:synthesis}: which MDPs share a policy and what policy is synthesized for each resulting UMDP. \textsc{KAPS} searches jointly for a partition of the UMDP and a minimax-regret policy for every smaller UMDP in that partition.

\begin{problem}[$k$-Partition Synthesis]\label{problem:partition}
    Given UMDP $\Family$ and integer $k>0$, find partition $\Partition=\{\Family_1,\ldots,\Family_k\}$ with $\bigcup_{i=1}^{k}\Family_i=\Family$, 
    and minimax-regret $\Policy^{\Regret}_1,\ldots,\Policy^{\Regret}_k$, s.t.
    \begin{align}
    \argmin_{\Partition}
    \: &
    \max_{i\in{1,\ldots,k}} 
    \max_{M\in\Family_i}
    \left(
    \Performance^\star(\MDP)
    -
    \Performance(\MDP,\Policy^{\Regret}_i)
    \right).
    \end{align}\label{prob:partition}
\end{problem}

Solving $k$-partition synthesis directly solves $k$-adaptable policy synthesis. The minimax-regret policies associated with the solution to \cref{problem:partition}, form an optimal set of prepared policies for \cref{problem:synthesis}. 

\begin{restatable}[Optimality Equivalence]
    {proposition}{partitionEquivalence}
\label{prop:partition-equivalence}
The optimal value obtained by solving \cref{problem:partition} equals the optimal value obtained by solving \cref{problem:synthesis}.
\end{restatable}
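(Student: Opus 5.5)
We prove the equality by establishing two inequalities between the optimal values. Let $\mathrm{OPT}_1$ denote the optimal value of \cref{problem:synthesis} and $\mathrm{OPT}_2$ that of \cref{problem:partition}. Two observations are used throughout. First, the policy space (deterministic, memoryless over finite $\States,\Actions$) is finite, so all minima and maxima are attained. Second, we read ``partition'' in \cref{problem:partition} as allowing empty blocks, or equivalently at most $k$ nonempty blocks. If exactly $k$ nonempty blocks were required and $n\ge k$, the value would still coincide, since splitting a block never increases regret. We will note this monotonicity where needed.

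\emph{Step 1: $\mathrm{OPT}_2 \le \mathrm{OPT}_1$.} Take an optimal policy set $\{\Policy_1,\dots,\Policy_k\}$ for \cref{problem:synthesis}. Assign each $\MDP\in\Family$ to an index attaining $\min_i(\Performance^\star(\MDP)-\Performance(\MDP,\Policy_i))$, breaking ties by the smallest index. Let $\Family_i$ be the set of MDPs assigned to $i$; this is a partition of $\Family$. For each block, the minimax-regret policy $\Policy^{\Regret}_i$ of $\Family_i$ satisfies
\[
\max_{\MDP\in\Family_i}\Regret(\MDP,\Policy^{\Regret}_i)\le \max_{\MDP\in\Family_i}\Regret(\MDP,\Policy_i)=\max_{\MDP\in\Family_i}\min_j \Regret(\MDP,\Policy_j).
\]
Maximizing over $i$ gives a partition objective of at most $\mathrm{OPT}_1$.

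\emph{Step 2: $\mathrm{OPT}_1 \le \mathrm{OPT}_2$.} Take an optimal partition $\{\Family_1,\dots,\Family_k\}$ together with its policies $\Policy^{\Regret}_i$. For empty blocks, choose an arbitrary policy. Use $\{\Policy^{\Regret}_1,\dots,\Policy^{\Regret}_k\}$ as a candidate in \cref{problem:synthesis}. For every $\MDP\in\Family_i$,
\[
\min_j \Regret(\MDP,\Policy^{\Regret}_j)\le \Regret(\MDP,\Policy^{\Regret}_i)\le \max_{\MDP'\in\Family_i}\Regret(\MDP',\Policy^{\Regret}_i).
\]
Since every $\MDP$ lies in some block, the worst case over $\Family$ is at most $\mathrm{OPT}_2$. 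This step also shows that the minimax-regret policies of an optimal partition form an optimal policy set, as the text claims.

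\emph{Main obstacle.} The only delicate point is the bookkeeping around degenerate partitions. If the assignment in Step 1 leaves some blocks empty but the formulation demands $k$ nonempty blocks, we repair the partition. While some block is empty and some block has at least two MDPs, move a single MDP into the empty block. The moved MDP can then use the optimal policy $\Policy^\star$ of its own MDP and so has zero regret. The block it left has a minimax regret no larger than before, since removing MDPs from a set cannot increase its minimax regret. The repair therefore preserves the bound. If $n<k$, nonempty blocks are impossible, and we adopt the empty-block convention, which the paper's ``$\bigcup$'' formulation permits.
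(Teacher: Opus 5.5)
Your proposal is correct and follows essentially the same route as the paper: the paper also bounds the value of the policy set $\{\Policy_1^\star,\dots,\Policy_k^\star\}$ built from an optimal partition, and conversely assigns each MDP to its best policy to obtain an induced partition. Your extra repair argument for empty blocks is a valid refinement that the paper leaves implicit, since it defines a partition only through the condition $\bigcup_i \Family_i = \Family$, which allows empty blocks.
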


\begin{proof}[Proof sketch]
Any partition yields a set of $k$ policies for \cref{problem:synthesis} with the same or lower regret. Conversely, any $k$ policies induce a partition by assigning each MDP to its best policy; replacing each policy by a minimax-regret policy for its subset cannot
increase regret. Hence the optimal values coincide. A full proof is provided in the supplementary material.
\end{proof}

\subsection{Nested Branch-and-Bound}
\textsc{KAPS} performs $k$-partition synthesis (and consequently, \cref{problem:synthesis}) through two nested Branch-and-Bound (BnB) searches. BnB is a search strategy that recursively divides a solution space into smaller parts, creating a search tree in which each node represents a set of candidate solutions. Problem specific insights are used to lower bound all solutions represented by a node, permitting pruning of its children when the bound cannot improve the incumbent, the best solution found so far. Otherwise, the search continues until a leaf is reached, where the node represents a single solution that can be evaluated exactly.

\textsc{KAPS} nests two such searches to jointly optimize the partition and its policies. The outer BnB searches over partitions of the MDPs. Each search node represents all partitions completing some partial partition, in which some MDPs have already been assigned to subsets and the remaining MDPs are still unassigned. Branching assigns one additional unassigned MDP to a subset until a complete partition is obtained.

Evaluating the regret of a partition solution requires synthesizing a minimax-regret policy for the UMDPs it contains. This is performed by the inner BnB. Each search node represents a set of deterministic memoryless policies through restrictions on the actions available in each state. Branching iteratively restricts these actions until only a single policy remains. The inner search therefore determines the policies that are used for the UMDPs in the partition determined by the outer search. When $k=1$, the partition is fixed and only the inner search is required.
\DeclareRobustCommand{\stirling}{\genfrac\{\}{0pt}{}}
\subsection{Problem-Specific Reasoning and Heuristics}
Both search spaces in the nested BnB are combinatorial: a UMDP $\Family$ has $\stirling{|\Family|}{k}$ possible $k$-partitions (Stirling number of the second kind), and there are
$|\Actions|^{|\States|}$ deterministic memoryless policies. Unguided exploration of the search space is therefore infeasible even for moderate instances. 

To reduce the search effort, both BnB procedures use problem-specific reasoning and heuristics. Their overall structure is shown in \cref{alg:kaps,alg:subset-policy}. We introduce four key problem-specific components in both layers of the BnB:
(1) a \emph{bounding rule} that lower-bounds the best solution represented by a search node; 
(2) a \emph{branching heuristic} that determines how a node is divided into children; 
(3) an \emph{incumbent heuristic} that constructs promising complete solutions early; and
(4) a \emph{node-ordering heuristic} that determines the order in which nodes are explored.

\begin{algorithm}[tb]
\caption{\textsc{KAPS}}
\label{alg:kaps}
\input{figures/pseudocode_outer}
\end{algorithm}

\begin{algorithm}[tb]
\caption{Subprocedure {\color{Blue}\textsc{PolicySearch}}}
\label{alg:subset-policy}
\input{figures/pseudocode_inner}
\end{algorithm}

\subsubsection{\textsc{BoundPartition}}
For each subset in a partial partition, we first check whether any subset of that subset is in the cache, and take the largest available policy-search lower bound. Then, to strengthen the bound, we perform policy search for each partial partition subset and each pair of MDPs within the subset for a short duration. We again take the maximum of their lower bounds as a bound for the partition. Validity follows from monotonicity: adding MDPs to a UMDP cannot reduce minimax regret.
\subsubsection{\textsc{BranchPartition}}
To branch partitions efficiently, we aim to create partial partitions containing conflicting MDPs, i.e., MDPs that do not admit good policies when grouped together in a partition. This is useful, since it allows for quick pruning of the search space. We determine a conflict order once before the search, by performing \textsc{GuessPolicy} for each pair of MDPs, and sorting the MDPs by their worst pair's regret value in decreasing order. During the search, we select the next MDP to branch on in conflict order and create a branch for each subset to which this MDP is added.
\subsubsection{\textsc{GuessPartition}}
Starting with all MDPs arbitrarily distributed across the subsets, we repeatedly move a single MDP between subsets whenever this improves the partition regret estimate using \textsc{GuessPolicy}. Ties are broken in favor of more evenly sized subsets. Once no improving move remains, we evaluate the resulting partition using \textsc{GuessPolicy} or {\color{Blue}\textsc{PolicySearch}} (\cref{alg:subset-policy}) and use it as the initial incumbent.
\subsubsection{\textsc{NextPartition}}
Each search node is represented by a partial partition and is stored in the queue $Q$. We select the next node to expand as the one with the lowest (lower) bound.
\subsubsection{\textsc{BoundPolicy}}
We solve each $M\in\Family'$ through policy iteration under search node $N$'s action restrictions, finding the optimal value $V^\star_N(M)$. The highest regret relative to the unrestricted optimum,
$\max_{M\in\Family'}\bigl(V^\star(M)-V^\star_N(M)\bigr)$, lower bounds the regret of any policy represented by $N$.
\subsubsection{\textsc{GuessPolicy}}
We construct an average MDP like in \cite{DBLP:conf/aaai/RigterLH21} by averaging the transition kernels
$\TransitionFun_i$, reward functions $\RewardFun_i$, and initial-state
distributions $\InitialDistr_i$ over all $M_i\in\Family'$. The optimal
policy of this average MDP is suggested as a candidate policy.
\subsubsection{\textsc{BranchPolicy}}
We also use the policy proposed by \textsc{GuessPolicy} to guide branching.
Among the states whose actions are not yet fixed, we select the state
with the highest occupancy under this policy and branch on the action
chosen there. One child fixes this action, while the other excludes it.
\subsubsection{\textsc{NextPolicy}}
We select the next node with equal probability from two orderings: the node with the smallest upper bound, to prioritize promising candidate policies, or the node with the smallest lower bound, to prioritize regions that may still contain a low-regret policy.

\subsection{Correctness}
When run without a time limit, \textsc{KAPS} is guaranteed to have considered all possible partitions and corresponding policies and thus returns an optimal solution to \cref{problem:synthesis}. 

\begin{restatable}[Correctness of \textsc{KAPS}]{theorem}{correctness}
\label{thm:correctness}
\textsc{KAPS} terminates and returns an optimal solution to \cref{problem:synthesis}.
\end{restatable}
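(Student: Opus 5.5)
We prove the two claims in \cref{thm:correctness} separately: termination, then optimality. Optimality reduces to \cref{prop:partition-equivalence}. It suffices to show that \textsc{KAPS} returns an optimal solution to \cref{problem:partition} together with minimax-regret policies for each subset, since by that proposition its value equals the optimum of \cref{problem:synthesis}.

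\emph{Termination.} Both search spaces are finite. The outer search space consists of partial assignments of the $|\Family|$ MDPs to $k$ subsets, at most $(k+1)^{|\Family|}$ nodes. Each branching step assigns one more unassigned MDP, so every root-to-leaf path has length at most $|\Family|$. In the inner search, each node is a restriction of the available actions per state, and each branching step either fixes an action or removes one. A measure such as $\sum_{s}(|\Actions_N(s)|-1)$ plus the number of unfixed states therefore strictly decreases along every branch. Hence every inner path has length at most $|\States|\,|\Actions|$, and each inner search tree has finitely many nodes. The subroutines also terminate:
\begin{itemize}[leftmargin=*]
\item policy iteration in \textsc{BoundPolicy} terminates;
\item the time-limited policy searches in \textsc{BoundPartition} terminate by construction;
\item \textsc{GuessPartition} terminates, because each accepted move strictly improves the regret estimate, or improves the balance under a tie, over finitely many partitions.
\end{itemize}
Each node is expanded at most once. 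So the inner search terminates for every subset, and the outer search makes finitely many inner calls, which gives termination.

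\emph{Inner correctness.} We show that \textsc{PolicySearch} on $\Family'$ returns a minimax-regret policy. The invariant is that every deterministic memoryless policy lies in the set represented by some queued node, lies in a pruned node, or has regret at least that of the incumbent. Branching preserves this because the children (fix action $a$ at $s$, or exclude it) partition the parent's policy set. Pruning is sound because \textsc{BoundPolicy} is a valid lower bound. For any policy $\Policy$ represented by $N$ and any $M$, we have $\Performance(M,\Policy)\le V^\star_N(M)$, so
\[
\max_{M}\bigl(V^\star(M)-V(M,\Policy)\bigr)\;\ge\;\max_M\bigl(V^\star(M)-V^\star_N(M)\bigr).
\]
Leaves are evaluated exactly. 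When the queue empties, the incumbent is therefore optimal.

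\emph{Outer correctness.} The outer argument follows the same pattern, with one extra step: validating \textsc{BoundPartition}. This is the main obstacle, because the bound reuses cached and truncated inner searches. The key lemma is monotonicity: if $\Family_a\subseteq\Family_b$, then the minimax regret of $\Family_b$ is at least that of $\Family_a$. This holds because the max in the regret ranges over a superset for every policy. Every completion of a partial partition has, in each final subset, a superset of the corresponding partial subset. Any lower bound reported by an interrupted or cached policy search on a subset (or pair) of a partial subset is still a valid lower bound for that sub-UMDP, since such a bound is the minimum of node bounds over the open queue and the incumbent. Therefore the maximum of these bounds lower-bounds the value of every completion.

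Branching on the next MDP over all $k$ subsets partitions the completions. Leaves are evaluated with exact \textsc{PolicySearch}. The incumbent from \textsc{GuessPartition} is a feasible partition, so using it is harmless as long as its value is computed exactly, or as an upper bound whose policies achieve it. Hence the final incumbent attains the optimum of \cref{problem:partition}. One detail needs care: partitions with empty subsets. A subset left empty can be given an arbitrary policy, which only helps, so allowing or disallowing empty subsets does not change the optimum. Applying \cref{prop:partition-equivalence} completes the proof.
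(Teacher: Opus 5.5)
Your decomposition matches the paper's: valid bounds, branching that covers every represented policy or partition, exact evaluation at leaves, then \cref{prop:partition-equivalence}. Your treatment of the truncated and cached searches inside \textsc{BoundPartition}, via the minimum of the open-queue bounds and the incumbent value, is actually more careful than the paper's, which treats those cached quantities as exact values ${\Regret}^\star(\Family'')$.

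The gap is in the inner step. You assert that \textsc{PolicySearch} on $\Family'$ returns a minimax-regret policy, and then that outer leaves are evaluated by exact \textsc{PolicySearch}. That is not what the procedure does. \textsc{KAPS} calls $\textsc{PolicySearch}(\Family_i,\Regret_{\mathrm{best}})$, and inside it the incumbent starts as $\bot$ with $U=\Regret_{\mathrm{cutoff}}$. Nodes are pruned against this $U$, and both incumbent updates require strict improvement. Hence whenever ${\Regret}^\star(\Family_i)\geq\Regret_{\mathrm{best}}$, the call returns $(\bot,\Regret_{\mathrm{best}})$. Your invariant only lets you conclude that the incumbent is optimal when some policy beats the cutoff. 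As written, your argument does not show that the leaf representing an optimal partition is evaluated at its true regret. Nor does it rule out the outer incumbent being replaced by a policy set containing $\bot$. Relatedly, both loops can exit via $L\geq U$ (inner) and $L\geq\Regret_{\mathrm{best}}$ (outer), not only when the queue empties, and your bound-validity invariant has to cover that exit too.

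The repair is the two-case statement the paper proves in \cref{lem:correct-sub}. If ${\Regret}^\star(\Family')<\Regret_{\mathrm{cutoff}}$, the returned policy attains ${\Regret}^\star(\Family')$. Otherwise $\bot$ is returned together with the value $\Regret_{\mathrm{cutoff}}$. Your invariant yields this once it is stated relative to $U$: every policy has regret at least the final $U$.

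At the outer level you then argue as the paper does. While the incumbent is suboptimal, some queued node represents an optimal partition. When that node becomes a leaf, every subset has ${\Regret}^\star(\Family_i)<\Regret_{\mathrm{best}}$, so every call returns an optimal policy and the leaf is evaluated exactly. Conversely, if any call returns $\bot$, its reported value forces $\Regret_N\geq\Regret_{\mathrm{best}}$, and the strict test rejects the leaf. This rejection is correct, because that partition cannot improve the incumbent. With this lemma in place, the rest of your proof goes through.
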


\begin{proof}[Proof outline]
\textsc{KAPS} terminates because both search trees are finite. Branching fully covers the solution space represented by each parent node. Both bounding rules are valid for the entire subtree rooted at a node: the \textsc{BoundPolicy} optimizes each MDP independently under the node restrictions, which is optimistic relative to requiring a fully shared policy, while \textsc{BoundPartition} uses smaller subsets of MDPs, which is optimistic compared to the level of uncertainty of the full subsets. Hence, pruning discards only nodes that cannot improve the incumbent. Since all unpruned leaves are evaluated exactly, the final incumbent is optimal.
\end{proof}

%% file: figures/pseudocode_outer.tex
\begin{algorithmic}[1]
\Require UMDP $\Family$ and policy-set size $k$
\Ensure Partition $\Partition_{\mathrm{best}}$, policies
$\Pi_{\mathrm{best}}$, and regret $\Regret_{\mathrm{best}}$

\State $N_0\gets((\emptyset,\dots,\emptyset),\Family)$
\State Initialize queue $Q$ with $N_0$
\State $(\Partition_{\mathrm{best}},
        \Pi_{\mathrm{best}},
        \Regret_{\mathrm{best}})
    \gets \textsc{GuessPartition}(\Family,k)$

\While{$Q$ is not empty \textbf{and}
       $L<\Regret_{\mathrm{best}}$}
    \State $N=(\Partition_N,\Family_{\mathrm{remaining}})
        \gets\textsc{NextPartition}(Q)$

    \If{$\Family_{\mathrm{remaining}}=\emptyset$}
        \ForAll{$\Family_i\in\Partition_N$}
            \State $(\Policy_i,\Regret_i)
                \gets{\color{Blue}\textsc{PolicySearch}}(\Family_i,\Regret_{\mathrm{best}})$
                \Statex \hfill $\triangleright$ see \cref{alg:subset-policy}
        \EndFor
        \State $\Regret_N\gets\max_i\Regret_i, \Pi_{N}\gets\{\Policy_i\}_{i=1}^k$

        \If{$\Regret_N<\Regret_{\mathrm{best}}$}
            \State $(\Partition_{\mathrm{best}},
                    \Pi_{\mathrm{best}},
                    \Regret_{\mathrm{best}})
                \gets
                (\Partition_N,\Pi_N,\Regret_N)$
        \EndIf

    \ElsIf{$\textsc{BoundPartition}(N)
        <\Regret_{\mathrm{best}}$}
        \State $\MDP\gets \textsc{BranchPartition}(N)$
        \State $Q \gets Q \cup \{\Partition_N \text{ with $\MDP$ added to $\Family_i$}\}_{i=1}^k$
    \EndIf
    \State $L \gets \min_{N'\in Q} \textsc{BoundPartition}(N')$
\EndWhile

\State \Return
$(\Partition_{\mathrm{best}},
  \Pi_{\mathrm{best}},
  \Regret_{\mathrm{best}})$
\end{algorithmic}

%% file: figures/pseudocode_inner.tex
\begin{algorithmic}[1]
\Require Subset $\Family'\subseteq\Family$ and regret cutoff
    $\Regret_{\mathrm{cutoff}}$
\Ensure Policy $\Policy_{\mathrm{best}}$ and value $U$, where
$\Policy_{\mathrm{best}}=\bot$ if no policy has regret below
$\Regret_{\mathrm{cutoff}}$

\State Let $N_0$ allow every action in every non-terminal state \label{line:root}
\State Initialize queue $Q$ with $N_0$
\State $\Policy_{\mathrm{best}}\gets\bot$,
       $U\gets\Regret_{\mathrm{cutoff}}$, and $L\gets 0$

\While{$Q$ is not empty \textbf{and} $L<U$}
    \State $N\gets\textsc{NextPolicy}(Q)$\label{line:pop}

    \If{$N$ represents a complete policy $\Policy$}\label{line:leaf}
        \If{$\max_{\MDP\in\Family'}\Regret(\MDP,\Policy)<U$}
            \State $\Policy_{\mathrm{best}}\gets\Policy, U\gets\max_{\MDP\in\Family'}\Regret(\MDP,\Policy)$
        \EndIf
    \ElsIf{$\textsc{BoundPolicy}(N)<U$}\label{line:bound}
        \State $\Policy\gets\textsc{GuessPolicy}(N)$
        \If{$\max_{\MDP\in\Family'}\Regret(\MDP,\Policy)<U$}
            \State $\Policy_{\mathrm{best}}\gets\Policy$,
             $U\gets\max_{\MDP\in\Family'}\Regret(\MDP,\Policy)$
        \EndIf

        \State $(s,a)\gets\textsc{BranchPolicy}(N)$\label{line:branch}
        \State $Q\gets Q\cup
            \{N+\textsc{force}(s,a),\,N+\textsc{forbid}(s,a)\}$\label{line:queue}
    \EndIf

    \State $L\gets
        \min_{N'\in Q}\textsc{BoundPolicy}(N')$
\EndWhile

\State \Return $(\Policy_{\mathrm{best}},U)$
\end{algorithmic}

%% file: sections/experiments.tex
\section{Experiments}
\label{sec:experiments}
We evaluate both the practical value of $k$-adaptable policy synthesis and the effectiveness of \textsc{KAPS} in solving it. Our experiments on a range of UMDP instances
support the following conclusions:
\begin{description}
    \item[C1.] KAPS outperforms existing methods on single-policy minimax-regret synthesis, particularly in optimality. 
    \item[C2.] A small policy set can capture much of the benefit of full adaptation, providing a favorable tradeoff between policy-set size and worst-case regret.
    \item[C3.] \textsc{KAPS} scales to instances with larger MDPs and greater numbers of MDPs in the UMDP. 
    \item[C4.] Each problem-specific component introduced in \cref{sec:method} contributes to the efficiency of KAPS.
\end{description}
Our open-source\footnote{https://github.com/SUMI-lab/KAPS} Python implementation of KAPS and all the baseline methods were run on a laptop with 24GB RAM and an Apple M4 Pro chip. 
\subsubsection{Benchmarks}
For our evaluation, we implement 6 kinds of UMDPs that are summarized below.
Disaster Rescue is a minimax-regret benchmark used in prior work \citep{ahmedRegretBasedRobust2013,ahmedSamplingBasedApproaches2017,DBLP:conf/aaai/RigterLH21}. 
Dynamic Power Management (D.P.M.) is adapted from an existing UMDP model \citep{andriushchenkoOracleGuidedApproachConstrained2025}. Maintenance is based on an established maintenance MDP formulation \citep{amari_cost-effective_2006}, to which we add uncertainty in the cost of 
labor and materials. 
Frozen Lake, Cliff Walking, and Taxi are adapted from the corresponding Gymnasium environments \citep{towers_gymnasium_2025}, to which we add uncertainty in, respectively, the hole layout, wind conditions, and passenger reliability.  
Complete UMDP specifications are provided in the supplemental material for reproducibility.
Unless stated otherwise, experiments use the smallest instance of
each benchmark listed in \cref{tab:single-policy-comparison}.

\subsubsection{C1: KAPS Outperforms Existing Methods}
While no existing method directly addresses $k$-adaptable policy synthesis, the problem reduces to single-policy minimax-regret synthesis for $k=1$, enabling a comparison with prior work. 

We compare KAPS at $k=1$ against existing methods:
\begin{itemize}[nolistsep]
    \item The exact MILP formulation of \cite{ahmedRegretBasedRobust2013}.
    \item CEMR with 3-step-options \cite{ahmedSamplingBasedApproaches2017}.
    \item CREG with 3-step-options \cite{DBLP:conf/aaai/RigterLH21}.
    \item Average MDP, optimizing the average of the MDPs.
    \item Best MDP, evaluating optimal policies for the individual MDPs on the full UMDP, and using the best one.
\end{itemize}
The Average and Best MDP heuristics were proposed in \cite{DBLP:conf/aaai/RigterLH21}. Only the MILP and KAPS solve the problem exactly and can prove optimality; the remaining methods are inexact heuristics, or approximations. For all methods, we use their stationary policy versions, i.e., policies that only depend on the state, not on time. 
We run each method on the benchmark instances with a time limit of five minutes. \Cref{tab:single-policy-comparison} reports the obtained minimax regret, runtime, and whether optimality was proven. For timed-out runs, it reports the remaining optimality gap where available: $(\Regret_{\mathrm{best}}-L)/\Regret_{\mathrm{best}}$, where $L$ is the lower bound and $\Regret_{\mathrm{best}}$ is the best known solution at timeout; the gap is zero when optimality is proven.

The inexact methods occasionally recover an optimal solution, but often return policies with considerably higher worst-case regret. KAPS obtains the lowest reported regret, on 11 of the 13 instances. On the remaining two instances, the MILP obtains a regret lower by only $0.01$. Among the exact methods, KAPS proves optimality on nine instances, compared with three for the MILP. Thus, for $k=1$, KAPS is competitive with the best existing method (MILP) in solution quality and substantially more effective at proving optimality.

\begin{table*}[t]
\centering
\label{tab:single-policy-comparison}
\input{figures/comparison-table}
\caption{Comparison of worst-case regret values, runtimes in seconds, and optimality gaps (\yes{} optimal, \no{} no bound, {\color{YellowOrange} gap}) of various methods from related work to \textsc{KAPS} with a runtime limit of 5 minutes on single policy optimization ($k{=}1$). MILP, CREG, and the Best MDP heuristic score well but are outperformed by \textsc{KAPS}. MILP is the only other method that can prove optimality, but \textsc{KAPS}' optimality gaps are consistently tighter.}
\end{table*}

\subsubsection{C2: Small Policy Sets Give Low or No Regret}
Our formulation of the $k$-adaptable policy synthesis problem is motivated by settings in which the realized MDP can be observed before execution, but the number of policies that can be prepared or maintained is constrained. We therefore show that constrained policy-set sizes provide a useful tradeoff between using one policy for the entire UMDP and preparing a separate optimal policy for every MDP.

For each benchmark, we run KAPS for policy-set sizes from $k=1$ to $k=|\Family|$. \Cref{fig:normalized-regret} reports the optimal minimax regrets, normalized by the regret obtained at $k=1$. At $k=|\Family|$, the problem trivially results in zero regret by separating all MDPs and assigning their optimal policies.

The largest reductions occur for the first few added policies. Moving from $k=1$ to $k=2$ eliminates regret for D.P.M. and Frozen Lake and substantially reduces it elsewhere; further increases in $k$ can still further reduce regret, but give diminishing returns. Thus, $k$ provides a meaningful tradeoff between the cost of maintaining additional policies and the benefit of adapting to the realized MDP.

\begin{figure}[t]
\centering
\includegraphics[width=\linewidth]{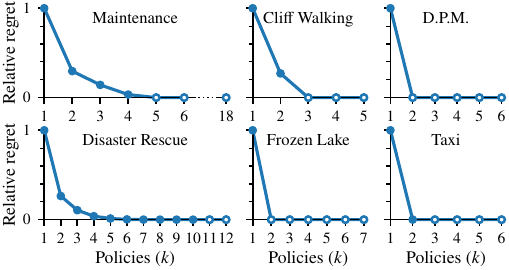}
\caption{Tradeoff between policy-set size $k$ and proven optimal minimax regret across benchmarks, normalized by the regret of the best single policy solution ($k{=}1$) for each benchmark. Unfilled markers indicate zero regret.}
\label{fig:normalized-regret}
\end{figure}

\subsubsection{C3: KAPS Scales to Larger Instances}
We investigate whether \textsc{KAPS} can still solve instances to proven
optimality as either the size of the underlying MDPs or the number of MDPs in the UMDP increases. We use the Maintenance benchmark, whose size can be varied systematically while retaining the same problem structure. 

We vary the number of states $|S|$ from 6 to 501---corresponding to more fine-grained machine degradation levels---and the number of MDPs $|\Family|$ from 5 to 100---corresponding to more random combinations of repair and replace costs. Other specifications of $\Family$ remain fixed, and $k=2$ is used throughout. \Cref{fig:scaling} reports the average runtimes (and 95\% confidence intervals) over 10 seeds required by \textsc{KAPS} to prove optimality for each instance.

\begin{figure}[t]
\centering
\includegraphics[width=\linewidth]{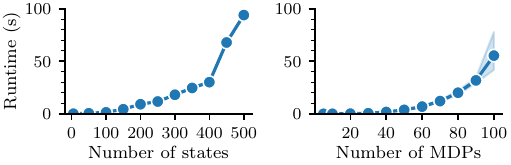}
\caption{Runtime required by \textsc{KAPS} to prove optimality on the Maintenance benchmark with $k{=}2$ when varying the number of states (left) and number of MDPs in the UMDP (right). The instances remain tractable even when scaled.}
\label{fig:scaling}
\end{figure}

Runtime initially grows gradually along both dimensions and increases more rapidly for the largest instances. Nevertheless, \textsc{KAPS} proves optimality for all tested instances within approximately 100 seconds, including MDPs with nearly 500 states and UMDPs containing 100 MDPs.

\subsubsection{C4: All Problem-Specific Components Contribute}
We assess whether the problem-specific components introduced in \cref{sec:method} improve efficiency over a generic BnB implementation, and whether their contributions are consistent across benchmarks. We perform cumulative ablations starting from a baseline that omits the bounding and incumbent heuristics. The baseline policy search branches on a uniformly random eligible state--action pair, whereas the partition search considers MDPs in input order. In both searches, the next node is selected uniformly at random from the queue.

For the policy-specific components, we set $k{=}1$ and successively enable \textsc{BoundPolicy}, \textsc{BranchPolicy}, \textsc{GuessPolicy}, and \textsc{NextPolicy}. For the partition-specific components, we set $k{=}2$ and successively enable \textsc{BoundPartition}, \textsc{BranchPartition}, \textsc{GuessPartition}, and \textsc{NextPartition}, while keeping all policy-specific components enabled. Each step therefore measures the marginal contribution of a rule given those enabled before it. Runtimes are normalized by the runtime of the full algorithm, and runs exceeding $100\times$ that runtime are treated as timeouts. We omit D.P.M.\ because its runtimes are too short to measure reliably, and Taxi because its runtime is too long without all components enabled.

\begin{figure}[t]
    \centering
    \includegraphics[width=\linewidth]{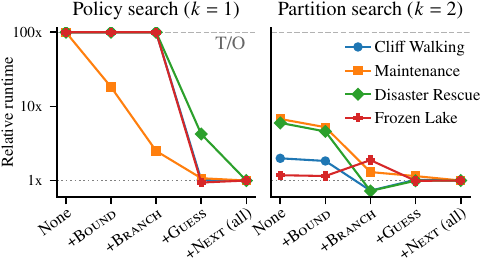}
    \caption{Runtime of the cumulative component ablations, normalized by the runtime of the full configuration for each benchmark. Timeout is set at $100\times$. The left panel adds policy-search components for listed benchmarks with $k{=}1$; the right panel adds partition-search components for benchmarks with $k{=}2$, with all policy-search components enabled.}
    \label{fig:ablation}
\end{figure}

\Cref{fig:ablation} shows that all components improve runtime on at least one benchmark. Policy-search components are collectively essential, with \textsc{GuessPolicy} giving the broadest gains. In partition search, \textsc{BranchPartition} has the largest overall effect, while \textsc{GuessPartition} and \textsc{NextPartition} help on selected benchmarks.
No partial configuration dominates, supporting the design choices of KAPS.

%% file: figures/comparison-table.tex
\setlength{\tabcolsep}{1.6pt}
\begin{tabular}{lrrrrrcrrcrrcrrcrrcrrc}
\toprule
 & & & & \multicolumn{3}{c}{Avg. MDP} & \multicolumn{3}{c}{Best MDP} & \multicolumn{3}{c}{CEMR} & \multicolumn{3}{c}{CREG} & \multicolumn{3}{c}{MILP} & \multicolumn{3}{c}{\textbf{\color{Blue}KAPS}} \\
 \cmidrule(lr){5-7} \cmidrule(lr){8-10} \cmidrule(lr){11-13} \cmidrule(lr){14-16} \cmidrule(lr){17-19} \cmidrule(lr){20-22}
Instance & $|S|$ & $|A|$ & $|\mathcal{M}|$ & regr. & time & opt. & regr. & time & opt. & regr. & time & opt. & regr. & time & opt. & regr. & time & opt. & regr. & time & opt. \\
\midrule
D.P.M. & 18 & 3 & 6 & \textbf{5.92} & <1 & \no & \textbf{5.92} & <1 & \no & 31.5 & 12 & \no & \textbf{5.92} & 12 & \no & \textbf{5.92} & 300 & \no & \textbf{5.92} & <1 & \yes \\
Cliff Walking & 35 & 4 & 5 & 1.16 & <1 & \no & 1.22 & <1 & \no & .82 & 31 & \no & \textbf{.66} & 1 & \no & \textbf{.66} & 300 & \no & \textbf{.66} & <1 & \yes \\
Maintenance S & 6 & 4 & 18 & 580 & <1 & \no & \textbf{555} & <1 & \no & 561 & 261 & \no & \textbf{555} & 300 & \no & \textbf{555} & <1 & \yes & \textbf{555} & <1 & \yes \\
Maintenance M & 51 & 4 & 18 & 220 & <1 & \no & \textbf{203} & <1 & \no & \textbf{203} & 300 & \no & \textbf{203} & 300 & \no & \textbf{203} & 300 & {\scriptsize\color{YellowOrange}92.0\%} & \textbf{203} & <1 & \yes \\
Maintenance L & 501 & 4 & 18 & 63.3 & <1 & \no & \textbf{59.9} & 1 & \no & \textbf{59.9} & 300 & \no & \textbf{59.9} & 300 & \no & \textbf{59.9} & 300 & {\scriptsize\color{YellowOrange}97.8\%} & \textbf{59.9} & 9 & \yes \\
Disaster Rescue & 32 & 8 & 12 & .58 & <1 & \no & .49 & <1 & \no & .87 & 300 & \no & .57 & 16 & \no & .56 & 300 & \no & \textbf{.34} & 1 & \yes \\
Disaster Rescue L & 100 & 8 & 20 & 3.33 & <1 & \no & 1.58 & <1 & \no & 3.56 & 300 & \no & 1.18 & 169 & \no & 9.28 & 300 & \no & \textbf{1.06} & 300 & {\scriptsize\color{YellowOrange}14.2\%} \\
Frozen Lake 4x4 & 16 & 4 & 7 & \textbf{1.00} & <1 & \no & \textbf{1.00} & <1 & \no & \textbf{1.00} & <1 & \no & \textbf{1.00} & 37 & \no & \textbf{1.00} & <1 & \yes & \textbf{1.00} & <1 & \yes \\
Frozen Lake 5x5 & 25 & 4 & 5 & .92 & <1 & \no & .95 & <1 & \no & .97 & 2 & \no & .91 & 14 & \no & \textbf{.83} & 202 & \yes & \textbf{.83} & 91 & \yes \\
Frozen Lake 6x6 & 36 & 4 & 5 & .92 & <1 & \no & .91 & <1 & \no & .96 & 3 & \no & .91 & 20 & \no & \textbf{.69} & 300 & {\scriptsize\color{YellowOrange}4.3\%} & .70 & 300 & {\scriptsize\color{YellowOrange}5.7\%} \\
Frozen Lake 7x7 & 49 & 4 & 5 & .91 & <1 & \no & .93 & <1 & \no & .96 & 3 & \no & .93 & 53 & \no & \textbf{.87} & 300 & {\scriptsize\color{YellowOrange}14.9\%} & .88 & 300 & {\scriptsize\color{YellowOrange}15.9\%} \\
Frozen Lake 8x8 & 64 & 4 & 5 & .91 & <1 & \no & .92 & <1 & \no & .95 & 4 & \no & .92 & 47 & \no & \textbf{.84} & 300 & {\scriptsize\color{YellowOrange}22.6\%} & \textbf{.84} & 300 & {\scriptsize\color{YellowOrange}20.2\%} \\
Taxi & 501 & 7 & 6 & 35.4 & <1 & \no & 21.3 & <1 & \no & 21.3 & 300 & \no & 38.9 & 60 & \no & 21.3 & 300 & \no & \textbf{12.5} & 197 & \yes \\
\bottomrule
\end{tabular}

%% file: sections/conclusion.tex
\section{Conclusion and Outlook}
\label{sec:conclusion}
Many sequential decision-making problems involve model uncertainty that is resolved only shortly before execution. Although this enables adaptation, policies may need to be prepared, communicated, practiced, validated, or approved in advance, limiting how many can be deployed. We introduced $k$-adaptable policy synthesis as a middle ground: prepare $k$ policies in advance and select the most suitable one once the realized MDP is known. We formalized this tradeoff under a minimax-regret objective, proved the problem NP-hard, and proposed \textsc{KAPS}, an exact nested branch-and-bound algorithm that jointly optimizes which MDPs share a policy and the policies themselves. Across our benchmarks, the first few additional policies capture much of the benefit of full adaptation. For $k=1$, \textsc{KAPS} is competitive with the best existing method in solution quality and proves optimality substantially more often. Scaling and ablation experiments further show that our problem-specific components significantly reduce runtime and enable larger instances to be solved to optimality.

Several directions remain for future work. As mentioned in \cref{sec:problem}, exploiting relationships between solutions for consecutive values of $k$ could efficiently trace the Pareto front between policy-set size and minimax regret, allowing decision makers to choose their preferred tradeoff after solving. Improved partition branching and node selection could make runtime more reliable across UMDPs, while approximate or sampling-based variants could extend the approach to very large or continuously parameterized sets of MDPs. Industrial case studies could further assess the practical value of bounded policy adaptation.

%% file: appendices/proofs/np-hardness.tex
\section{Proof of \cref{thm:np-hardness}}
We prove that k-adaptable policy synthesis with minimax regret (\cref{problem:synthesis}) is an NP-hard problem, by showing that the established NP-hard problem \textit{3-SAT} is reducible to it. An illustrative example of the reduction is given in \cref{fig:reduction-example}.
\input{figures/SAT-reduction}

\nphardness*
\begin{proof}[Proof of \cref{thm:np-hardness}]
    We reduce from \textsc{3-SAT}. Let $\varphi = C_1 \land \cdots \land C_m$ be a 3-CNF formula over variables $X = \{x_1,\ldots,x_n\}$. 
    We construct a family of MDPs $\Family_\varphi$ with one MDP $M_j$ for each clause $C_j$. We then solve the $k=1$ minimax regret problem on $\Family_\varphi$ and answer that $\varphi$ is satisfiable if and only if the optimal regret is $0$.

    All MDPs share $\States = \{s_1,\ldots,s_n\}\cup\{s_\mathrm{sat}, s_\mathrm{unsat}\}$ and $\Actions = \{0, 1\}$. For a clause $C_j$, let
\begin{equation*}
    \chi_j(i,a) =
    \mathbf{1}\{(x_i \in C_j \wedge a=1)
    \vee
    (\neg x_i \in C_j \wedge a=0)\},
\end{equation*}
which is $1$ exactly when assigning $x_i$ the truth value represented by action $a$ satisfies a literal of $C_j$.

The MDP $M_j$ is deterministic. From a variable state $s_i$, if $\chi_j(i,a)=1$, the process moves to $s_\mathrm{sat}$. If $\chi_j(i,a)=0$, it moves to $s_{i+1}$ when $i<n$, and to $s_\mathrm{unsat}$ when $i=n$. Both $s_\mathrm{sat}$ and $s_\mathrm{unsat}$ are absorbing. A reward is obtained only when entering $s_{\mathrm{sat}}$ from a different state.

    A policy $\Policy$ defines a truth assignment $\tau_\Policy$, where $\tau_\Policy(x_i)=\Policy(s_i)$ for all $i\in \{1, \dots,n\}$. We now show that for every clause $C_j$, $V_j^\pi(s_1)=1$ if and only if the assignment $\tau_\pi$ satisfies $C_j$.

    First, suppose that $\tau_\pi$ satisfies $C_j$. Then there exists an index $i$ such that $\chi_j(i,\pi(s_i))=1$. Let $i^\star$ be the smallest such index. For every $\ell<i^\star$, we have $\chi_j(\ell,\pi(s_\ell))=0$, so the trajectory continues from $s_\ell$ to $s_{\ell+1}$. Hence the trajectory reaches $s_{i^\star}$. Since $\chi_j(i^\star,\pi(s_{i^\star}))=1$, the transition from $s_{i^\star}$ goes to $s_{\mathsf{sat}}$ and yields reward $1$. Therefore $V_j^\pi(s_1)=1$.

    Second, suppose $V_j^\pi(s_1)=1$. Then, the trajectory must transition from some variable state $s_i$ to $s_{\mathsf{sat}}$. By construction, this implies $\chi_j(i,\pi(s_i))=1$. Therefore a literal of $C_j$ is true under $\tau_\pi$, so $\tau_\pi$ satisfies $C_j$.
    
    For every MDP $M_j$, the optimal value from the initial state is $V^*_j(s_1)=1$: since $C_j$ contains at least one literal, one can choose the corresponding action at its variable state and reach $s_\mathrm{sat}$. Thus the regret $V^*_j(s_1)-V^\Policy_j(s_1)=0$ if and only if $C_j$ is satisfied by $\tau_\Policy$. It follows that there exists a policy $\Policy$ with 
    \[
    \max_{j\in\{1,\dots,m\}} \bigl( V^*_j(s_1)-V^\Policy_j(s_1)\bigr)=0
    \]
    if and only if there exists an assignment $\tau$ satisfying all clauses of $\varphi$. Hence the constructed $k=1$ instance has optimal minimax regret $0$ if and only if $\varphi$ is satisfiable.

    The construction $\Family_\varphi$ has $m$ MDPs, each with $n+2$ states and two actions, and is therefore polynomial in the size of $\varphi$. Thus, deciding whether the optimal $k=1$ minimax regret is $0$ is NP-hard.
\end{proof}

%% file: figures/SAT-reduction.tex
\begin{figure*}[t]
\centering

\def\xstep{1.4}      
\def\xterm{4.2}      
\def\yterm{1.0}      
\def\mdpshift{8.0}   

\begin{tikzpicture}[
    >=Latex,
    state/.style={draw, circle, minimum size=8mm, inner sep=0pt, font=\small},
    clause/.style={font=\small, anchor=west},
    mdp/.style={font=\small\bfseries, anchor=west},
    every path/.style={draw, semithick}
]

\node[clause] at (-0.5,1.5*\yterm) {$C_1=(a \lor b \lor c)$};
\node[mdp]    at (-0.5,1.0*\yterm) {$M_1$};

\node[state] (a1) at (0,0) {$a$};
\node[state] (b1) at (\xstep,0) {$b$};
\node[state] (c1) at (2*\xstep,0) {$c$};
\node[state] (d1) at (3*\xstep,0) {$d$};
\node[state] (sat1) at (4*\xstep,\yterm) {$s_{\mathrm{sat}}$};
\node[state] (unsat1) at (4*\xstep,0) {$s_{\mathrm{unsat}}$};

\path[->] (a1) edge node[below] {$0$} (b1);
\path[->] (b1) edge node[below] {$0$} (c1);
\path[->] (c1) edge node[below] {$0$} (d1);
\path[->] (d1) edge node[below] {$0,1$} (unsat1);

\path[->] (a1) edge[bend left=28] node[above, pos=0.10] {$1$} (sat1);
\path[->] (b1) edge[bend left=22] node[above, pos=0.16] {$1$} (sat1);
\path[->] (c1) edge[bend left=16] node[above, pos=0.28] {$1$} (sat1);

\path[->] (sat1) edge[loop right] ();
\path[->] (unsat1) edge[loop right] ();

\begin{scope}[xshift=\mdpshift cm]
\node[clause] at (-0.5,1.5*\yterm) {$C_2=(\neg a \lor \neg b \lor d)$};
\node[mdp]    at (-0.5,1.0*\yterm) {$M_2$};

\node[state] (a2) at (0,0) {$a$};
\node[state] (b2) at (\xstep,0) {$b$};
\node[state] (c2) at (2*\xstep,0) {$c$};
\node[state] (d2) at (3*\xstep,0) {$d$};
\node[state] (sat2) at (4*\xstep,\yterm) {$s_{\mathrm{sat}}$};
\node[state] (unsat2) at (4*\xstep,0) {$s_{\mathrm{unsat}}$};

\path[->] (a2) edge node[below] {$1$} (b2);
\path[->] (b2) edge node[below] {$1$} (c2);
\path[->] (c2) edge node[below] {$0,1$} (d2);
\path[->] (d2) edge node[below] {$0$} (unsat2);

\path[->] (a2) edge[bend left=28] node[above, pos=0.10] {$0$} (sat2);
\path[->] (b2) edge[bend left=22] node[above, pos=0.16] {$0$} (sat2);
\path[->] (d2) edge[bend left=16] node[above, pos=0.28] {$1$} (sat2);

\path[->] (sat2) edge[loop right] ();
\path[->] (unsat2) edge[loop right] ();
\end{scope}

\end{tikzpicture}
\caption{Reduction of the formula \(\varphi=(a \lor b \lor c)\land(\neg a \lor \neg b \lor d)\) to a family of MDPs. Each MDP \(M_j\) corresponds to one clause \(C_j\). A deterministic policy chooses action \(1\) (true) or \(0\) (false) at each variable state.}
\label{fig:reduction-example}
\end{figure*}

%% file: appendices/proofs/partition-equivalence.tex
\section{Proof of \cref{prop:partition-equivalence}}

We briefly reiterate the definitions and notation relevant to this proof. An indexed collection \(\Partition=(\Family_1,\dots,\Family_k)\) is a size-\(k\) partition of the UMDP \(\Family\) if and only if
\[
\bigcup_{i=1}^{k}\Family_i=\Family.
\]
For compactness, throughout this appendix let
\[
\Regret(\Family',\Policy)
\coloneq
\max_{\MDP\in\Family'}\Regret(\MDP,\Policy), \text{ and}
\]
\[
{\Regret}^\star(\Family')
\coloneq
\min_{\Policy} \Regret(\Family',\Policy).
\]
For a partition
$\Partition=(\Family_1,\ldots,\Family_k)$, let
\[
\Regret(\Partition)
\coloneq
\max_{i\in\{1,\ldots,k\}} {\Regret}^\star(\Family_i).
\]

\partitionEquivalence*
\begin{proof}
Let \(\Partition^\star=(\Family_1^\star,\dots,\Family_k^\star)\) minimize \(\Regret(\Partition)\), and let \(\Policy_i^\star\) be a minimax-regret policy for \(\Family_i^\star\). Construct the policy set
\[
\Pi^\star\coloneq\{\Policy_1^\star,\dots,\Policy_k^\star\}.
\]
For every \(\MDP\in\Family_i^\star\), the policy \(\Policy_i^\star\) is available in \(\Pi^\star\), and therefore
\[
\min_{j\in\{1,\dots,k\}}\Regret(\MDP,\Policy_j^\star)
\leq
\Regret(\MDP,\Policy_i^\star).
\]
Consequently,
\begin{align*}
\max_{\MDP\in\Family}\min_{j\in\{1,\dots,k\}}\Regret(\MDP,\Policy_j^\star)
&\leq\max_{i\in\{1,\dots,k\}}\max_{\MDP\in\Family_i^\star}\Regret(\MDP,\Policy_i^\star)\\
&=\Regret(\Partition^\star).
\end{align*}

Now consider any feasible policy set \(\Pi=\{\Policy_1,\dots,\Policy_k\}\). Assign each \(\MDP\in\Family\) to an index
\[
i(\MDP)\in\argmin_{j\in\{1,\dots,k\}}\Regret(\MDP,\Policy_j),
\]
breaking ties arbitrarily, and define
\[
\Family_i\coloneq\{\MDP\in\Family\mid i(\MDP)=i\}.
\]
The indexed subsets \(\Partition=(\Family_1,\dots,\Family_k)\) form a partition of \(\Family\). By the definition of optimal regret,
\[
{\Regret}^\star(\Family_i)
\leq
\max_{\MDP\in\Family_i}\Regret(\MDP,\Policy_i).
\]
Since \(\Partition^\star\) is an optimal partition,
\begin{align*}
\Regret(\Partition^\star)
&\leq\Regret(\Partition)\\
&\leq\max_{i\in\{1,\dots,k\}}\max_{\MDP\in\Family_i}\Regret(\MDP,\Policy_i)\\
&=\max_{\MDP\in\Family}\min_{j\in\{1,\dots,k\}}\Regret(\MDP,\Policy_j).
\end{align*}

Thus, the regret of \(\Pi^\star\) is no greater than that of any feasible policy set, so \(\Pi^\star\) is an optimal solution to \cref{problem:synthesis}.
\end{proof}

%% file: appendices/proofs/correctness-kaps.tex
\section{Proof of \cref{thm:correctness}}
Before proving the correctness of \textsc{KAPS}, we establish the correctness of its subprocedure \textsc{PolicySearch}
(\cref{alg:subset-policy}).

For compactness, throughout this appendix let
\[
\Regret(\Family',\Policy)
\coloneq
\max_{\MDP\in\Family'}\Regret(\MDP,\Policy), \text{ and}
\]
\[
{\Regret}^\star(\Family')
\coloneq
\min_{\Policy} \Regret(\Family',\Policy).
\]
For a partition
$\Partition=(\Family_1,\ldots,\Family_k)$, let
\[
\Regret(\Partition)
\coloneq
\max_{i\in\{1,\ldots,k\}} {\Regret}^\star(\Family_i) \text{ and }
\]
\[
{\Regret}^\star(\Partition)
\coloneq
\min_{\Partition} \Regret(\Partition).
\]

\subsection{Correctness of \textsc{PolicySearch}}
We first show that the subprocedure's bounding rule is valid (\cref{lem:bound-policy}) and that its branching rule correctly splits the represented policy space (\cref{lem:branch-policy}).

We introduce the following notation to reason about the set of policies represented by a node in the search procedure. For a policy-search node $N$, consisting of \textsc{force} and
\textsc{forbid} rules,
\[
\Pi_{N}
\coloneq
\left\{
\Policy:\States\to\Actions
\;\middle|\;
\begin{aligned}
\Policy(s)&=a
&&\forall\,\textsc{force}(s,a)\in N,\\
\Policy(s)&\neq a
&&\forall\,\textsc{forbid}(s,a)\in N
\end{aligned}
\right\}.
\]

\begin{lemma}[Validity of \textsc{BoundPolicy}]\label{lem:bound-policy}
For any policy-search node $N$, $\textsc{BoundPolicy}(N)$ lower-bounds the regret of every policy in $\Pi_{N}$.
\end{lemma}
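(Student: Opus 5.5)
The argument works MDP by MDP. We show that for every $\Policy\in\Pi_N$ and every $\MDP\in\Family'$ the restricted optimum dominates the value of $\Policy$, and then lift this to the maximum over $\Family'$.

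First, formalize the restricted optimum. For a node $N$, let $\Actions_N(s)\subseteq\Actions$ be the set of actions allowed at $s$: the singleton $\{a\}$ if $\textsc{force}(s,a)\in N$, and otherwise $\Actions$ minus all forbidden actions at $s$. Then $\Pi_N$ is exactly the set of deterministic memoryless policies with $\Policy(s)\in\Actions_N(s)$ for all $s$. Hence $\Pi_N$ is the deterministic stationary policy class of a restricted MDP $\MDP_N$, which has the same transitions, rewards and initial distribution as $\MDP$ but only the actions $\Actions_N(s)$ in state $s$. By standard discounted MDP theory, policy iteration on $\MDP_N$ returns
\[
\Performance^\star_N(\MDP)=\max_{\Policy\in\Pi_N}\Performance(\MDP,\Policy).
\]
The maximum is attained by a deterministic stationary policy, so restricting to deterministic memoryless policies loses nothing. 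This requires $\Actions_N(s)\neq\emptyset$ for all $s$, and this nonemptiness is the step needing care. If some $\Actions_N(s)$ is empty, then $\Pi_N=\emptyset$ and the lemma holds vacuously; I would note that $\textsc{BoundPolicy}$ may then be taken as $+\infty$. I would also check how ``non-terminal states'' (line~\ref{line:root}) are handled: terminal states have only a trivial action, so they impose no restriction.

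Next, fix $\Policy\in\Pi_N$ and $\MDP\in\Family'$. By the displayed identity, $\Performance(\MDP,\Policy)\le\Performance^\star_N(\MDP)$, and therefore
\[
\Regret(\MDP,\Policy)=\Performance^\star(\MDP)-\Performance(\MDP,\Policy)\ge\Performance^\star(\MDP)-\Performance^\star_N(\MDP).
\]
Here $\Performance^\star(\MDP)$ is the unrestricted optimum, which is identical in both expressions. Taking the maximum over $\MDP\in\Family'$ on both sides gives
\[
\max_{\MDP\in\Family'}\Regret(\MDP,\Policy)\ge\max_{\MDP\in\Family'}\bigl(\Performance^\star(\MDP)-\Performance^\star_N(\MDP)\bigr)=\textsc{BoundPolicy}(N).
\]
This is the claim. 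The key observation is that the bound lets each MDP choose its own best policy within $\Pi_N$, which is a relaxation of requiring a single shared policy.

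The main obstacle is not the inequality chain, which is immediate, but justifying that policy iteration computes the true maximum over $\Pi_N$ rather than an approximation. This rests on the standard result that a discounted MDP with finite state and action sets has an optimal deterministic stationary policy, and that policy iteration terminates at it. If the implementation computes values only to a numerical tolerance, I would state the lemma for exact policy evaluation, or note that the bound is valid up to that tolerance.
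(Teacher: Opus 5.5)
Your proof is correct and follows the paper's argument: you use the pointwise inequality $\Performance^\star_N(\MDP)\ge\Performance(\MDP,\Policy)$ for every $\Policy\in\Pi_N$ and $\MDP\in\Family'$, turn it into a regret inequality, and note that it is preserved under the maximum over $\Family'$. Your extra care about the restricted MDP, nonempty action sets, and policy iteration attaining the exact restricted optimum spells out what the paper compresses into ``by definition of an optimum.''
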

\begin{proof}
By definition,
\[
\textsc{BoundPolicy}(N)
=
\max_{M\in\Family'}
\bigl(V^\star(M)-V^\star_N(M)\bigr).
\]
Take an arbitrary $\Policy\in\Pi_{N}$. By definition of an optimum, for every $M\in\Family'$ we have
$V^\star_N(M)\geq V(\Policy,M)$. Therefore,
\[
V^\star(M)-V^\star_N(M)
\leq
V^\star(M)-V(\Policy,M).
\]
Since the inequality holds pointwise for every $M\in\Family'$, taking the maximum over $M\in\Family'$ preserves it:
\begin{align*}
\max_{M\in\Family'}
\bigl(V^\star(M)-V^\star_N(M)\bigr)
&\leq
\max_{M\in\Family'}
\bigl(V^\star(M)-V(\Policy,M)\bigr)
\\
\textsc{BoundPolicy}(N)
&\leq
\Regret(\Family',\Policy).
\end{align*}
Since $\Policy\in\Pi_{N}$ was arbitrary, the bound is valid for every policy represented by $N$.
\end{proof}

\begin{lemma}[Correctness of policy branching]
\label{lem:branch-policy}
Let $N$ be a policy-search node, and let $(s,a)$ be any state--action pair for which there exist $\Policy,\Policy'\in\Pi_{N}$ such that $\Policy(s)=a$ and $\Policy'(s)\neq a$. Then the child nodes $N'$ and $N''$ added in \algline{alg:subset-policy}{line:queue} satisfy
\[
\Pi_{N'}\cup\Pi_{N''}=\Pi_{N},\quad\Pi_{N'}\subsetneq\Pi_{N}
\quad\text{and}\qquad
\Pi_{N''}\subsetneq\Pi_{N}.
\]
\end{lemma}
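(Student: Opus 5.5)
The plan is to unfold the definition of $\Pi_{N}$ for the two children and argue by pointwise case analysis on the value $\Policy(s)$. By \algline{alg:subset-policy}{line:queue}, the children are $N' = N+\textsc{force}(s,a)$ and $N'' = N+\textsc{forbid}(s,a)$. Since $\Pi_{N}$ is defined as the conjunction of the constraints contributed by the rules in $N$, adding a rule only adds a constraint. This gives
\[
\Pi_{N'} = \{\Policy\in\Pi_{N} \mid \Policy(s)=a\},
\qquad
\Pi_{N''} = \{\Policy\in\Pi_{N} \mid \Policy(s)\neq a\}.
\]
I would state these two identities first, as immediate consequences of the definition; they carry essentially the whole proof.

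For the union, the inclusion $\Pi_{N'}\cup\Pi_{N''}\subseteq\Pi_{N}$ follows from both sets being subsets of $\Pi_{N}$. For the reverse inclusion, take any $\Policy\in\Pi_{N}$. Either $\Policy(s)=a$, in which case $\Policy\in\Pi_{N'}$, or $\Policy(s)\neq a$, in which case $\Policy\in\Pi_{N''}$. The two children are in fact disjoint, which I would note in passing, since it also shows no policy is explored twice.

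For strictness, I would use the hypothesis on $(s,a)$. The policy $\Policy'\in\Pi_{N}$ with $\Policy'(s)\neq a$ lies in $\Pi_{N}\setminus\Pi_{N'}$, so $\Pi_{N'}\subsetneq\Pi_{N}$. Symmetrically, the policy $\Policy\in\Pi_{N}$ with $\Policy(s)=a$ lies in $\Pi_{N}\setminus\Pi_{N''}$, so $\Pi_{N''}\subsetneq\Pi_{N}$. As a consequence, both children are nonempty and every branching step strictly shrinks a finite set; this is what the termination argument for \cref{thm:correctness} needs.

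There is no real obstacle here. The only point needing care is the claim that \textsc{BranchPolicy} always returns a pair satisfying the hypothesis. This is where I would check that the selected state is one whose action is not yet fixed and that the proposed action is still allowed, since otherwise one child could equal $N$ or be empty. The lemma as stated assumes the hypothesis, however, so I would only remark on this, deferring the argument to the correctness proof.
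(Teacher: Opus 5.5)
Your proof is correct and matches the paper's argument essentially step for step: the paper shows $\Pi_{N}\subseteq\Pi_{N'}\cup\Pi_{N''}$ by case analysis on whether $\Policy(s)=a$. It gets the reverse inclusion because each child only adds a restriction to $N$, and strictness from the two witness policies supplied by the hypothesis.
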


\begin{proof}
In \algline{alg:subset-policy}{line:queue} the nodes $N'=N+\textsc{force}(s,a)$ and $N''=N+\textsc{forbid}(s,a)$ are created. 

Take an arbitrary $\Policy\in\Pi_{N}$. If $\Policy(s)=a$, then $\Policy\in\Pi_{N'}$. Otherwise, $\Policy(s)\neq a$, and $\Policy\in\Pi_{N''}$. Thus, $\Policy \in \Pi_{N'}\cup \Pi_{N''}$ and 
\begin{equation}\label{eq:policy-branching-subset}
\Pi_{N} \subseteq \Pi_{N'}\cup \Pi_{N''}.    
\end{equation}

Conversely, take $\Policy\in\Pi_{N'}$. By construction, $N'$ retains all restrictions of $N$ and additionally requires $\Policy(s)=a$. Therefore, every policy represented by $N'$ is also represented by $N$, so $\Pi_{N'}\subseteq\Pi_{N}$. Similarly, $N''$ retains all restrictions of $N$ and additionally requires $\Policy(s)\neq a$, yielding $\Pi_{N''}\subseteq\Pi_{N}$. Thus,
\begin{equation}\label{eq:policy-branching-superset}
\Pi_{N'}\cup\Pi_{N''}\subseteq\Pi_{N}.
\end{equation}
Combining \cref{eq:policy-branching-subset,eq:policy-branching-superset} gives \[
\Pi_{N'}\cup \Pi_{N''}=\Pi_{N}.
\]

Finally, by assumption there exists $\Policy'\in\Pi_{N}$ with $\Policy'(s)\neq a$, so $\Policy'\notin\Pi_{N'}$ and $\Pi_{N'}\subsetneq\Pi_{N}$. Likewise, there exists $\Policy\in\Pi_{N}$ with $\Policy(s)=a$, so $\Policy\notin\Pi_{N''}$ and $\Pi_{N''}\subsetneq\Pi_{N}$.
\end{proof}

\begin{lemma}[Correctness of \textsc{PolicySearch}]\label{lem:correct-sub}
    If ${\Regret}^\star(\Family')<\Regret_{\mathrm{cutoff}}$, \textsc{PolicySearch} in \cref{alg:subset-policy} returns a policy $\Policy$ that achieves ${\Regret}^\star(\Family')$. Otherwise, it returns $\bot$.
\end{lemma}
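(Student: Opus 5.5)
The argument combines termination of the search with a loop invariant, using \cref{lem:bound-policy} and \cref{lem:branch-policy}. \emph{Termination} holds because every branching step in \algline{alg:subset-policy}{line:queue} produces children whose policy sets are strict subsets of the parent's set (\cref{lem:branch-policy}). The root set $\Pi_{N_0}$ has at most $|\Actions|^{|\States|}$ elements. So every root-to-leaf path has finite length, and since each node has only two children, the tree is finite. A node whose set has more than one policy always has some state $s$ where two of its policies disagree, so \textsc{BranchPolicy} can find a valid pair $(s,a)$; a node is a leaf exactly when its set contains a single policy. (Terminal states that are not restricted at the root are treated as irrelevant to the value.) The loop therefore ends after finitely many iterations.

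Next I would maintain the following invariant at the top of every loop iteration. Either some policy $\Policy^\dagger$ with $\Regret(\Family',\Policy^\dagger)={\Regret}^\star(\Family')$ belongs to $\bigcup_{N\in Q}\Pi_N$, or $U\le{\Regret}^\star(\Family')$. In addition, whenever $\Policy_{\mathrm{best}}\neq\bot$, we have $U=\Regret(\Family',\Policy_{\mathrm{best}})$, and $U\le\Regret_{\mathrm{cutoff}}$ at all times. The second part follows directly because $\Policy_{\mathrm{best}}$ and $U$ are only ever updated together, and only to a strictly smaller value. The invariant holds at initialization since $Q=\{N_0\}$ covers every policy. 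Popping a node $N$ can affect the first part in three ways. If $N$ is a leaf, its single policy is evaluated exactly, so if it is the optimal $\Policy^\dagger$ then afterwards $U\le{\Regret}^\star(\Family')$. If $N$ is pruned because $\textsc{BoundPolicy}(N)\ge U$, then \cref{lem:bound-policy} gives $\Regret(\Family',\Policy)\ge U$ for every $\Policy\in\Pi_N$; if $\Policy^\dagger\in\Pi_N$ this yields ${\Regret}^\star(\Family')\ge U$. If $N$ is branched, its children cover $\Pi_N$ by \cref{lem:branch-policy}. The \textsc{GuessPolicy} update only lowers $U$ and preserves the second part.

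At exit, either $Q=\emptyset$, or $L\ge U$ where $L$ is the minimum of \textsc{BoundPolicy} over $Q$. In the second case \cref{lem:bound-policy} shows that every policy in $Q$ has regret at least $U$, so in both cases we conclude $U\le{\Regret}^\star(\Family')$. I expect this exit step, together with the fact that $L$ is computed after the pop, to be the main subtle point; the initial value $L=0$ causes no problem because regret is nonnegative. Now split into the two cases. If ${\Regret}^\star(\Family')<\Regret_{\mathrm{cutoff}}$ and $U$ were still equal to $\Regret_{\mathrm{cutoff}}$ with $\Policy_{\mathrm{best}}=\bot$, this would contradict $U\le{\Regret}^\star(\Family')$. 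Hence an update occurred, and $\Regret(\Family',\Policy_{\mathrm{best}})=U\le{\Regret}^\star(\Family')$, so by minimality $\Policy_{\mathrm{best}}$ achieves ${\Regret}^\star(\Family')$. Otherwise ${\Regret}^\star(\Family')\ge\Regret_{\mathrm{cutoff}}$, and since updates require a regret strictly below $U\le\Regret_{\mathrm{cutoff}}$, none ever happens and the procedure returns $\bot$.
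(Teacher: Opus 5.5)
Your proposal is correct and follows the paper's proof. Both arguments rest on three steps: termination, because the policy sets shrink strictly in a finite binary tree; the invariant that some optimal policy survives in the queue while $U>{\Regret}^\star(\Family')$, with \cref{lem:bound-policy} justifying pruning and \cref{lem:branch-policy} justifying branching; and the strict-improvement argument for the $\bot$ case. Your explicit handling of the exit condition $L\ge U$, including the initial value $L=0$, makes rigorous a step the paper leaves implicit.
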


\begin{proof}
First, we establish termination. The root node created in \algline{alg:subset-policy}{line:root} represents all deterministic stationary policies. This set is finite because the state and action spaces are finite. By \cref{lem:branch-policy}, each child represents a strict subset of the policies represented by its parent. By construction, \mbox{\textsc{BranchPolicy}} selects a state--action pair satisfying the premise of \cref{lem:branch-policy}. Hence, each branching step removes at least one represented policy, so every root-to-leaf path has length at most $|\Pi_{N_0}|-1$. Since the search tree is binary and has finite depth, it contains finitely many nodes. The procedure therefore terminates.

Suppose first that
${\Regret}^\star(\Family')<\Regret_{\mathrm{cutoff}}$. Initially, $U=\Regret_{\mathrm{cutoff}}$. By \cref{lem:branch-policy}, branching preserves every policy represented by an expanded node. A node is left unexpanded only if $\textsc{BoundPolicy}(N)\geq U$ in \algline{alg:subset-policy}{line:bound}. By \cref{lem:bound-policy}, no policy represented by such a node has regret strictly below $U$. Therefore, while $U>{\Regret}^\star(\Family')$, at least one node in the queue represents an optimal policy. Since the search terminates, either such a policy is eventually evaluated as a leaf in \algline{alg:subset-policy}{line:leaf}, or the incumbent is updated to the same optimal regret earlier by \textsc{GuessPolicy}. Thus, upon termination, the incumbent achieves ${\Regret}^\star(\Family')$.

Finally, suppose that ${\Regret}^\star(\Family')\geq\Regret_{\mathrm{cutoff}}$. No feasible policy has regret strictly below the initial value $U=\Regret_{\mathrm{cutoff}}$. Since both incumbent-update conditions require a strict improvement, $\Policy_{\mathrm{best}}$ remains equal to its initial value $\bot$. Therefore, the procedure returns $\bot$.
\end{proof}

\subsection{Correctness of \textsc{KAPS}}
We now establish the analogous properties for the branch-and-bound search over partitions done in \textsc{KAPS} (\cref{alg:kaps}). We first show that \textsc{BoundPartition} is valid (\cref{lem:bound-partition}) and that partition branching preserves all completions of a partial partition (\cref{lem:branch-partition}). Together with \cref{lem:correct-sub}, these properties are used to establish correctness of \textsc{KAPS} (\cref{thm:correctness}).

For a partition-search node $N=(\Partition_N,\Family_{\mathrm{remaining}})$, where $\Partition_N=(\Family_1,\ldots,\Family_k)$, let $\mathfrak{P}_N$ denote the set of complete partitions extending $\Partition_N$:
\[
\mathfrak{P}_N
\coloneq
\left\{
(\Family'_1,\ldots,\Family'_k)
\;\middle|\;
\begin{aligned}
&\Family_i\subseteq\Family'_i &&\forall i,\\
&\bigcup_{i=1}^k\Family'_i=\Family,\\
&\Family'_i\cap\Family'_j=\emptyset &&\forall i\neq j
\end{aligned}
\right\}.
\]

\begin{lemma}[Validity of \textsc{BoundPartition}]
\label{lem:bound-partition}
For any partition-search node $N$, $\textsc{BoundPartition}(N)$ lower-bounds the regret of every partition in $\mathfrak{P}_N$.
\end{lemma}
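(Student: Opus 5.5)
The proof rests on one monotonicity fact. I would state it first as an auxiliary claim: for UMDPs $\Family''\subseteq\Family'$ we have ${\Regret}^\star(\Family'')\leq{\Regret}^\star(\Family')$.

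To prove the claim, let $\Policy$ be a minimax-regret policy for $\Family'$. It is feasible for $\Family''$, and a maximum over a subset never exceeds the maximum over the superset. Hence
\[
{\Regret}^\star(\Family'')\leq\max_{\MDP\in\Family''}\Regret(\MDP,\Policy)\leq\max_{\MDP\in\Family'}\Regret(\MDP,\Policy)={\Regret}^\star(\Family').
\]
In addition, any valid lower bound $\ell$ on the regret of all policies for a UMDP $\Family''$ satisfies $\ell\leq{\Regret}^\star(\Family'')$. This applies to the lower bound $L$ maintained by \textsc{PolicySearch}. When the search terminates, $L\geq U={\Regret}^\star$ and is still valid, because by \cref{lem:bound-policy} and \cref{lem:branch-policy} every unexpanded node's \textsc{BoundPolicy} value lower-bounds its subtree. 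When it is interrupted early, $L=\min_{N'\in Q}\textsc{BoundPolicy}(N')$ is likewise a lower bound on every policy not yet discarded. Discarded policies have regret at least $U$, the regret of an actual policy, so they do not affect the bound on ${\Regret}^\star$.

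With the claim in hand, fix a node $N$ with $\Partition_N=(\Family_1,\ldots,\Family_k)$ and an arbitrary completion $(\Family'_1,\ldots,\Family'_k)\in\mathfrak{P}_N$, so that $\Family_i\subseteq\Family'_i$. \textsc{BoundPartition}$(N)$ is a maximum over finitely many terms, and I would bound each term separately.

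\textbf{Cached terms.} Each cached term is a policy-search lower bound $\ell$ for some $\Family''\subseteq\Family_i$. Then
\[
\ell\leq{\Regret}^\star(\Family'')\leq{\Regret}^\star(\Family'_i)\leq\Regret(\Family'_1,\ldots,\Family'_k).
\]

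\textbf{Short policy-search terms.} These come from short searches on the whole subset $\Family_i$ or on a pair $\{\MDP,\MDP'\}\subseteq\Family_i$. The same chain of inequalities applies with $\Family''=\Family_i$ or $\Family''=\{\MDP,\MDP'\}$.

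Since every term is at most $\Regret(\Family'_1,\ldots,\Family'_k)$, so is their maximum. The completion was arbitrary, which proves the lemma. If all subsets are empty, the bound is $0$, which is trivially valid because regret is nonnegative.

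The step I expect to need the most care is the one where a time-limited \textsc{PolicySearch} returns a lower bound. I must argue that the reported $L$ (the running $\min$ over queue bounds, or the cutoff-based value) is a valid lower bound on ${\Regret}^\star(\Family'')$ even though the search was interrupted or ran with a cutoff. I would handle this by formalizing the queue invariant already used in \cref{lem:correct-sub}. The invariant is that every policy in $\Pi_{N_0}$ either lies in some queued node's $\Pi_{N'}$ or has regret at least the current $U$. Combined with \cref{lem:bound-policy}, this gives $\min(L,U)\leq{\Regret}^\star(\Family'')$. I would make sure the cached value is taken to be this minimum.
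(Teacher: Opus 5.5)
Your proof is correct and follows the paper's argument, namely the chain $\ell\le{\Regret}^\star(\Family'')\le{\Regret}^\star(\Family'_i)\le\Regret(\Partition')$ obtained from monotonicity under inclusion. You go further than the paper in two places: you prove the monotonicity, which the paper only asserts, and you handle truncated or cutoff searches, whereas the paper writes the cached terms as exact values ${\Regret}^\star(\Family'')$.

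One small slip: at termination $L$ can strictly exceed $U={\Regret}^\star$ (or be undefined when the queue is empty), and then $L$ is not a valid bound. Your final choice of caching $\min(L,U)$, justified by the queue invariant, is the correct value, not $L$ alone.
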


\begin{proof}
By definition, \textsc{BoundPartition} takes the largest regret bound over all subsets of the current subsets that are stored in the cache:
\[
\textsc{BoundPartition}(N)
=
\max_{i\in\{1,\ldots,k\}}
\max_{\substack{\Family''\subseteq\Family_i}}
{\Regret}^\star(\Family'').
\]

Let $\Partition'=(\Family'_1,\ldots,\Family'_k)\in\mathfrak{P}_N$ be an arbitrary partition represented by $N$. For every subset $\Family''$ considered above, we have $\Family''\subseteq\Family_i\subseteq\Family'_i$. Therefore, 
\[
{\Regret}^\star(\Family'')
\leq
{\Regret}^\star(\Family'_i)
\leq
\Regret(\Partition').
\]
The first inequality follows because enlarging the set of MDPs served by a single policy cannot decrease its optimal minimax regret, and the second follows from 
\[
\Regret(\Partition')
=
\max_{j\in\{1,\ldots,k\}}
{\Regret}^\star(\Family'_j).
\]

Thus, every quantity considered by \textsc{BoundPartition} lower-bounds $\Regret(\Partition')$. Taking their maximum preserves the inequality, so
\[
\textsc{BoundPartition}(N)
\leq
\Regret(\Partition').
\]
Since $\Partition'\in\mathfrak{P}_N$ was arbitrary, the claim follows.
\end{proof}

\begin{lemma}[Correctness of partition branching]
\label{lem:branch-partition}
Let $N$ be a partition-search node, and let $N_1,\ldots,N_r$, with $r\leq k$, be the children obtained by assigning the selected unassigned MDP to the eligible subsets. Then every partition in $\mathfrak{P}_N$ is represented by at least one child, possibly after relabeling its subsets.
\end{lemma}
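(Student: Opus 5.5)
Fix an arbitrary complete partition $\Partition'=(\Family'_1,\ldots,\Family'_k)\in\mathfrak{P}_N$. Let $\MDP$ be the unassigned MDP selected by \textsc{BranchPartition}. The plan is to show that $\MDP$'s position in $\Partition'$ determines a child that represents $\Partition'$, either directly or after permuting subset labels.

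Since $\Partition'$ covers $\Family$ and its subsets are pairwise disjoint, there is exactly one index $j$ with $\MDP\in\Family'_j$. Consider the child $N^{(j)}$ obtained by adding $\MDP$ to $\Family_j$, i.e.\ with partial partition $(\Family_1,\ldots,\Family_j\cup\{\MDP\},\ldots,\Family_k)$ and remaining set $\Family_{\mathrm{remaining}}\setminus\{\MDP\}$. We then check the three defining conditions of $\mathfrak{P}_{N^{(j)}}$:
\begin{itemize}
\item for $i\neq j$, $\Family_i\subseteq\Family'_i$ holds by assumption;
\item $\Family_j\cup\{\MDP\}\subseteq\Family'_j$ holds because $\Family_j\subseteq\Family'_j$ and $\MDP\in\Family'_j$;
\item the covering and disjointness conditions depend only on $\Partition'$, so they are inherited unchanged.
\end{itemize}
Hence $\Partition'\in\mathfrak{P}_{N^{(j)}}$. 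If all $k$ children are generated, as in \cref{alg:kaps}, this completes the proof.

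The statement, however, allows $r\le k$ children over ``eligible'' subsets. I take this to mean that children placing $\MDP$ into distinct \emph{empty} subsets are symmetric, so only one is kept. This is the main obstacle, and it is where the relabeling enters.

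Suppose $\Family_j=\emptyset$ but the child for $j$ was not generated. Some kept child $N^{(j')}$ places $\MDP$ into an empty subset $\Family_{j'}$. Let $\sigma$ be the transposition of $j$ and $j'$, and define $\Partition''$ by $\Family''_i\coloneq\Family'_{\sigma(i)}$. Then $\Partition''$ is again a disjoint cover of $\Family$. The inclusions $\Family_i\subseteq\Family''_i$ hold for $i\notin\{j,j'\}$ trivially, and for $i\in\{j,j'\}$ because $\Family_j=\Family_{j'}=\emptyset$ before the assignment. After the assignment, $\Family_{j'}\cup\{\MDP\}\subseteq\Family'_j=\Family''_{j'}$. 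So $\Partition''\in\mathfrak{P}_{N^{(j')}}$.

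Finally, we note that $\Regret(\Partition'')=\Regret(\Partition')$. This holds because $\Regret(\cdot)$ is a maximum over subsets and is therefore invariant under relabeling. The case split is thus: $\Family_j$ nonempty, handled directly, or empty, handled via the transposition.
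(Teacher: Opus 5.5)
Your proof is correct and follows the paper's argument: the selected MDP lies in a unique subset $\Family'_j$, the child that assigns it to $\Family_j$ still represents $\Partition'$, and relabeling covers the case where symmetric empty-subset children were omitted. You make the relabeling step explicit via a transposition, which the paper only asserts, and you omit the converse inclusion the paper also proves (every partition represented by a child lies in $\mathfrak{P}_N$), which the statement as written does not require.
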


\begin{proof}
Let $M\in\Family_{\mathrm{remaining}}$ be the MDP selected for branching, and let $\Partition'=(\Family'_1,\ldots,\Family'_k)\in\mathfrak{P}_N$.

Since $\Partition'$ is a partition of $\Family$, $M$ belongs to exactly one subset, say $\Family'_i$. Assigning $M$ to $\Family_i$ therefore yields a child that still represents $\Partition'$. If assignments to equivalent empty subsets are omitted, this holds up to relabeling.

Conversely, every child only adds an assignment to the partial partition represented by $N$. Thus, every partition represented by a child also belongs to $\mathfrak{P}_N$.
\end{proof}

\correctness*

\begin{proof}
The partition-search tree has depth at most $|\Family|$, since each branch assigns one previously unassigned MDP, and branching factor at most $k$, since there are at most $k$ subsets to which that MDP can be assigned. Hence, the tree contains finitely many nodes. Each leaf invokes \textsc{PolicySearch} finitely many times, and every invocation terminates by \cref{lem:correct-sub}. Therefore, \textsc{KAPS} terminates.

The root represents every complete partition of $\Family$. By \cref{lem:branch-partition}, branching preserves every represented partition. A node is pruned only if $\textsc{BoundPartition}(N)\geq\Regret_{\mathrm{best}}$. By \cref{lem:bound-partition}, no partition represented by such a node can improve the incumbent. Thus, while $\Regret_{\mathrm{best}}$ exceeds the optimal partition regret, some node in the queue represents an optimal partition.

When such a partition reaches a leaf, every subset $\Family_i$ satisfies
$
{\Regret}^\star(\Family_i)
<
{\Regret}_{\mathrm{best}}.
$
Hence, by \cref{lem:correct-sub}, each call to \textsc{PolicySearch} returns a policy attaining ${\Regret}^\star(\Family_i)$. The leaf is therefore evaluated at its true regret and updates the incumbent. Conversely, if a call returns $\bot$, then some subset has optimal regret at least $\Regret_{\mathrm{best}}$, so the partition cannot improve the incumbent.

Therefore, an optimal partition is eventually evaluated unless an equally good incumbent has already been found. Upon termination, $\Regret_{\mathrm{best}}$ equals the minimum partition regret. By \cref{prop:partition-equivalence}, the returned policies are optimal for \cref{problem:synthesis}.
\end{proof}

%% file: appendices/benchmark-details.tex
\section{Benchmark Specifications}
\label{app:benchmarks}

This appendix specifies the UMDPs used in the experiments. Unless stated otherwise, all unspecified transition probabilities are zero, rewards for transitions with zero probability are set to \(0\), and \(\gamma=0.999\). For benchmarks described through intermediate random outcomes, \(\TransitionFun_i\) is the transition distribution induced by those outcomes, and \(\RewardFun_i(s,a,s')\) is the expected transition reward conditional on reaching \(s'\).

\subsection{Disaster Rescue}
\label{app:benchmark-disaster-rescue}

Disaster Rescue is based on a minimax-regret UMDP benchmark used in prior work~\citep{ahmedRegretBasedRobust2013,ahmedSamplingBasedApproaches2017,DBLP:conf/aaai/RigterLH21}. The benchmark is a UMDP \(\Family\), where all MDPs share
\[
\begin{gathered}
\States=\{0,\dots,31\},\qquad
\Actions=\{0,\dots,7\},\\
\InitialDistr(24)=1,\qquad
\gamma=0.999.
\end{gathered}
\]
The terminal state is \(7\). The actions are the eight compass directions, ordered clockwise starting with up.

Each MDP \(M_i\) is specified by two obstacle states and one swamp state. The first obstacle state is selected from
\[
O^{(1)}=\{0,1,2,8,9,10,16,17,18\},
\]
the second obstacle state from
\[
O^{(2)}=\{22,23,30,31\},
\]
and the swamp state from
\[
W=\{4,5,6,12,13,14\}.
\]

For non-terminal states, the intended compass direction is taken with probability \(0.8\), while the two neighboring compass directions are each taken with probability \(0.1\). Moves that leave the grid are clipped to the nearest grid cell. If the resulting cell is not an obstacle, the transition enters that cell. If the resulting cell is an obstacle, the transition enters the obstacle with probability \(0.05\) and otherwise remains in the current state. The terminal state is absorbing.

The reward function is shared in form but depends on the swamp location of \(M_i\). Entering a non-swamp state has reward \(-0.5\). Entering the swamp state has reward sampled from \([-2,-1]\). The terminal state has reward \(0\) for all actions and next states.

\subsection{Dynamic Power Management}
\label{app:benchmark-dpm}

Dynamic Power Management is adapted from an existing UMDP model for constrained policy synthesis~\citep{andriushchenkoOracleGuidedApproachConstrained2025}. The benchmark is the UMDP
\[
\Family
=
\{M(\lambda,e):(\lambda,e)\in
\{0.10,0.25,0.45\}\times\{0.8,1.6\}\}.
\]
All MDPs share
\[
\begin{gathered}
\States=\{0,\dots,5\}\times
\{\mathsf{sleep},\mathsf{idle},\mathsf{active}\},\\
\Actions=\{\mathsf{sleep},\mathsf{idle},\mathsf{active}\},\qquad
\InitialDistr((0,\mathsf{idle}))=1,\qquad
\gamma=0.999.
\end{gathered}
\]

An MDP \(M_i=M(\lambda_i,e_i)\) differs only in the request-arrival probability \(\lambda_i\) and the energy-price multiplier \(e_i\). For a state \((q,m)\) and action \(a\), the next mode is \(a\). A request arrives with probability \(\lambda_i\). If \(a=\mathsf{active}\) and \(q>0\), one queued request is processed with probability \(0.9\); otherwise no request is processed. If \(x\in\{0,1\}\) denotes arrival and \(b\in\{0,1\}\) denotes service, the next queue length is
\[
q'=\min\{5,q-b+x\}.
\]
The transition probability \(\TransitionFun_i((q',a)\mid(q,m),a)\) is the total probability of all pairs \((x,b)\) that yield this value of \(q'\).

Rewards are negative costs. For an outcome \((x,b)\), define
\[
\ell=\mathbf{1}\{x=1\text{ and }q-b=5\},
\]
and
\[
c_i(q,m,a,x,b)
=
e_i c(a)(1+0.05q)
+
0.4\,\mathbf{1}\{a\neq m\}
+
12.0\,\ell,
\]
where
\(
c(\mathsf{sleep})=0.2,
c(\mathsf{idle})=1.0,\) and \(
c(\mathsf{active})=3.0.
\)
The reward \(\RewardFun_i((q,m),a,(q',a))\) is the negative expected value of \(c_i(q,m,a,x,b)\) over all outcomes \((x,b)\) that lead to \(q'\).

\subsection{Maintenance Scheduling}
\label{app:benchmark-maintenance}

Maintenance Scheduling is based on a condition-based maintenance MDP formulation~\citep{amari_cost-effective_2006}, to which we add uncertainty in repair and replacement costs. The benchmark is the UMDP
\begin{multline*}
    \Family=\{M(c_{\mathrm{repair}},c_{\mathrm{replace}}):(c_{\mathrm{repair}},c_{\mathrm{replace}})\\
    \in\{0.5,1.7,2.9\}\times\{1,2,3,4,5,6\}\}.
\end{multline*}

All MDPs share
\[
\begin{gathered}
\States=\{0,\dots,5\},\qquad
\Actions=\{\mathsf{wait},\mathsf{service},\mathsf{repair},\mathsf{replace}\},\\
\InitialDistr(0)=1,\qquad
\gamma=0.999.
\end{gathered}
\]
State \(0\) is healthy and state \(5\) is failed.

The transition kernel is shared by all MDPs. For \(s,s'\in\States\),
\[
\begin{aligned}
\TransitionFun(s'\mid s,\mathsf{wait})
&=
0.65\,\mathbf{1}\{s'=s\}{}+\\
&\quad
0.35\,\mathbf{1}\{s'=\min\{s+1,5\}\},\\[0.3em]
\TransitionFun(s'\mid s,\mathsf{service})
&=
0.88\,\mathbf{1}\{s'=s\}{}+\\
&\quad
0.12\,\mathbf{1}\{s'=\min\{s+1,5\}\},\\[0.3em]
\TransitionFun(s'\mid s,\mathsf{repair})
&=
0.75\,\mathbf{1}\{s'=0\}{}+\\
&\quad
0.25\,\mathbf{1}\{s'=s\},\\[0.3em]
\TransitionFun(s'\mid s,\mathsf{replace})
&=
\mathbf{1}\{s'=0\}.
\end{aligned}
\]
The operating cost is
\[
c_{\mathrm{op}}(s)
=
\begin{cases}
0.1+0.5s & \text{if } s<5,\\
8.0 & \text{if } s=5.
\end{cases}
\]
For \(M_i=M(c_{\mathrm{repair},i},c_{\mathrm{replace},i})\), the reward function is
\[
\begin{aligned}
\RewardFun_i(s,\mathsf{wait},s') &=
-c_{\mathrm{op}}(s),\\
\RewardFun_i(s,\mathsf{service},s') &=
-\bigl(c_{\mathrm{op}}(s)+0.4\bigr),\\
\RewardFun_i(s,\mathsf{repair},s') &=
-\bigl(c_{\mathrm{op}}(s)+c_{\mathrm{repair},i}\bigr),\\
\RewardFun_i(s,\mathsf{replace},s') &=
-c_{\mathrm{replace},i}.
\end{aligned}
\]

\subsection{Frozen Lake}
\label{app:benchmark-frozen-lake}

Frozen Lake is adapted from the corresponding Gymnasium environment~\citep{towers_gymnasium_2025}, with uncertainty in the hole layout. The benchmark is the UMDP
\[
\Family=\{M(H_j):j=0,\dots,6\},
\]
where \(H_j\subseteq\States\) is the set of hole states in \(M(H_j)\). States are indexed by
\[
s=x+4y,\qquad x,y\in\{0,\dots,3\}.
\]
All MDPs share
\[
\begin{gathered}
\States=\{0,\dots,15\},\qquad
\Actions=\{\mathsf{left},\mathsf{down},\mathsf{right},\mathsf{up}\},\\
\InitialDistr(0)=1,\qquad
\gamma=0.999.
\end{gathered}
\]
The goal state is \(15\).

For non-terminal states, transitions follow the slippery Frozen Lake dynamics: with probability \(1/3\) the agent moves in the selected direction, and with probability \(1/3\) each it moves in one of the two perpendicular directions. Moves that leave the grid keep the agent in the same state. Hole states and the goal state are absorbing. The reward function is shared by all MDPs:
\[
\RewardFun_j(s,a,s')=\mathbf{1}\{s'=15 \text{ and } s\neq 15\}.
\]
The hole sets are
\[
\begin{array}{ccl}
M(H_0) &:& H_0=\{13,14\},\\
M(H_1) &:& H_1=\{10,11\},\\
M(H_2) &:& H_2=\{3,9\},\\
M(H_3) &:& H_3=\{2,7\},\\
M(H_4) &:& H_4=\{4,5\},\\
M(H_5) &:& H_5=\{6,12\},\\
M(H_6) &:& H_6=\{11,14\}.
\end{array}
\]

\subsection{Cliff Walking}
\label{app:benchmark-cliff-walking}

Cliff Walking is adapted from the corresponding Gymnasium environment~\citep{towers_gymnasium_2025}, with uncertainty in wind conditions. The benchmark is the UMDP
\[
\Family
=
\{M(w):w\in
\{\mathsf{calm},\mathsf{north},\mathsf{east},\mathsf{south},\mathsf{west}\}\}.
\]
All MDPs share a \(5\times 7\) grid with states indexed by
\[
s=x+7y,\qquad x\in\{0,\dots,6\},\quad y\in\{0,\dots,4\}.
\]
The shared MDP components are
\[
\begin{gathered}
\States=\{0,\dots,34\},\qquad
\Actions=\{\mathsf{up},\mathsf{right},\mathsf{down},\mathsf{left}\},\\
\InitialDistr(14)=1,\qquad
\gamma=0.999.
\end{gathered}
\]
The goal state is \(20\), and the obstacle states are
\[
O=\{10,17,24\}.
\]
Obstacle states and the goal state are absorbing.

The shared movement function \(\operatorname{move}(s,d)\) returns the state reached by moving one cell from \(s\) in direction \(d\), with moves clipped at the grid boundary and moves into \(O\) replaced by staying in \(s\).

An MDP \(M_i=M(w_i)\) differs only in the wind direction \(w_i\). For non-absorbing states, the selected action is applied first:
\[
\bar{s}=\operatorname{move}(s,a).
\]
If \(w_i=\mathsf{calm}\), then
\[
\TransitionFun_i(s'\mid s,a)
=
\mathbf{1}\{s'=\bar{s}\}.
\]
Otherwise, the wind is applied after the action with probability \(0.25\):
\[
\TransitionFun_i(s'\mid s,a)
=
0.75\,\mathbf{1}\{s'=\bar{s}\}
+
0.25\,\mathbf{1}\{s'=\operatorname{move}(\bar{s},w_i)\}.
\]
The reward function is shared by all MDPs and depends only on the next state:
\[
\RewardFun_i(s,a,s')
=
\begin{cases}
0 & \text{if } s'=20,\\
-1 & \text{otherwise,}
\end{cases}
\]
with reward \(0\) for self-loops in absorbing states.

\subsection{Taxi}
\label{app:benchmark-taxi}

Taxi is adapted from Gymnasium Taxi-v4~\citep{towers_gymnasium_2025}, with weather-dependent dynamics, passenger reliability, hazard costs, and a skip action. The benchmark is the UMDP \(\Family=\{M_i\}_{i=1}^6\). All MDPs share the Taxi-v4 state encoding
\[
s=\operatorname{enc}(r,c,p,d),
\]
where \(r,c\in\{0,\dots,4\}\) are the taxi row and column, \(p\in\{0,\dots,4\}\) is the passenger location, and \(d\in\{0,\dots,3\}\) is the destination. We add one absorbing skip state, indexed by \(500\), so
\[
\States=\{0,\dots,500\}.
\]
The action space is
\[
\Actions=\{
\mathsf{south},\mathsf{north},\mathsf{east},\mathsf{west},
\mathsf{pickup},\mathsf{dropoff},\mathsf{skip}
\}.
\]
All MDPs share
\[
\InitialDistr(\operatorname{enc}(4,0,0,1))=1,
\qquad
\gamma=0.999.
\]
The hazard cells are
\[
H=\{(1,2),(2,2),(3,2)\}.
\]

For actions \(a\neq\mathsf{skip}\), the transition kernel of \(M_i\) is the Gymnasium Taxi-v4 kernel with
\[
\mathsf{is\_rainy}=\mathsf{true},\qquad
\mathsf{fickle\_passenger}=\mathsf{true},
\]
and MDP-specific rainy movement probability \(\rho_i\) and fickle-passenger probability \(\phi_i\). The skip action deterministically moves to the absorbing skip state:
\[
\TransitionFun_i(500\mid s,\mathsf{skip})=1
\quad\text{for all }s\neq 500.
\]
Successful dropoff states and the skip state are absorbing with reward \(0\).

Rewards are obtained by remapping the Gymnasium Taxi rewards. For \(a\neq\mathsf{skip}\), the standard Taxi reward \(-1\) is mapped to \(-c_{\mathrm{step},i}\), the reward \(20\) is mapped to \(r_{\mathrm{del},i}\), and the penalty \(-10\) is mapped to \(-10\). Transitions into a hazard cell receive an additional penalty \(c_{\mathrm{haz},i}\). The skip action has reward \(0\).

The six MDPs are
\[
\begin{array}{cclcccr}
\toprule
i & \text{Name} & \rho_i & \phi_i & c_{\mathrm{step},i}
& r_{\mathrm{del},i} & c_{\mathrm{haz},i}\\
\midrule
1 & \mathsf{clear\_stable}     & 0.98 & 0.0 & 1.0 & 25 & 0\\
2 & \mathsf{rainy\_stable}     & 0.75 & 0.0 & 1.2 & 25 & 4\\
3 & \mathsf{stormy\_stable}    & 0.55 & 0.0 & 1.5 & 25 & 10\\
4 & \mathsf{clear\_disrupted}  & 0.98 & 0.5 & 1.0 & 35 & 0\\
5 & \mathsf{rainy\_disrupted}  & 0.75 & 0.5 & 1.2 & 35 & 4\\
6 & \mathsf{stormy\_disrupted} & 0.55 & 0.5 & 1.5 & 35 & 10\\
\bottomrule
\end{array}
\]

%% file: appendices/extra_results.tex
\section{Additional Experimental Results}
The main paper includes results (\cref{tab:single-policy-comparison}) on a comparison between KAPS and MILP for $k=1$, both limited to a runtime of five minutes. \Cref{tab:single-policy-comparisons-appendix} contains additional measurements on the same experiments, at runtimes of one and twenty seconds. These measurements provided no additional insight and were omitted from the main paper.

\begin{table*}
\centering
\setlength{\tabcolsep}{4.8pt}
\begin{tabular}{l|rrrr|rrrr|rrrr}
\toprule
 & \multicolumn{4}{c}{1s runtime} & \multicolumn{4}{c}{20s} & \multicolumn{4}{c}{300s} \\
 \cmidrule(lr){2-5}\cmidrule(lr){6-9}\cmidrule(lr){10-13}
 & \multicolumn{2}{c}{MILP} & \multicolumn{2}{c}{\textbf{\color{Blue}KAPS}} & \multicolumn{2}{c}{MILP} & \multicolumn{2}{c}{\textbf{\color{Blue}KAPS}} & \multicolumn{2}{c}{MILP} & \multicolumn{2}{c}{\textbf{\color{Blue}KAPS}} \\
 \cmidrule(lr){2-3}\cmidrule(lr){4-5}\cmidrule(lr){6-7}\cmidrule(lr){8-9}\cmidrule(lr){10-11}\cmidrule(lr){12-13}
Instance & regret $\downarrow$ & opt. & regret & opt. & regret & opt. & regret & opt. & regret & opt. & regret & opt. \\
\midrule
D.P.M. & \textbf{5.92} & \no & \textbf{5.92} & \yes & \textbf{5.92} & \no & \textbf{5.92} & \yes & \textbf{5.92} & \no & \textbf{5.92} & \yes \\
Cliff Walking & .81 & \no & \textbf{.66} & \yes & \textbf{.66} & \no & \textbf{.66} & \yes & \textbf{.66} & \no & \textbf{.66} & \yes \\
Maintenance S & \textbf{555.4} & \yes & \textbf{555.4} & \yes & \textbf{555.4} & \yes & \textbf{555.4} & \yes & \textbf{555.4} & \yes & \textbf{555.4} & \yes \\
Maintenance M & \textbf{203.0} & \no & \textbf{203.0} & \yes & \textbf{203.0} & {\color{YellowOrange}98.4\%} & \textbf{203.0} & \yes & \textbf{203.0} & {\color{YellowOrange}92.0\%} & \textbf{203.0} & \yes \\
Maintenance L & 2578.5 & \no & \textbf{63.3} & \no & 60.1 & \no & \textbf{59.9} & \yes & \textbf{59.9} & {\color{YellowOrange}97.8\%} & \textbf{59.9} & \yes \\
Disaster Rescue & 45.1 & \no & \textbf{.34} & \yes & 4.53 & \no & \textbf{.34} & \yes & .56 & \no & \textbf{.34} & \yes \\
Disaster Rescue L & 494.6 & \no & \textbf{1.41} & {\color{YellowOrange}61.0\%} & 37.25 & \no & \textbf{1.22} & {\color{YellowOrange}32.0\%} & 9.53 & \no & \textbf{1.06} & {\color{YellowOrange}14.2\%} \\
Frozen Lake 4x4 & \textbf{1.00} & \yes & \textbf{1.00} & \yes & \textbf{1.00} & \yes & \textbf{1.00} & \yes & \textbf{1.00} & \yes & \textbf{1.00} & \yes \\
Frozen Lake 5x5 & .87 & {\color{YellowOrange}17.2\%} & \textbf{.85} & {\color{YellowOrange}8.2\%} & \textbf{.84} & {\color{YellowOrange}6.0\%} & \textbf{.84} & {\color{YellowOrange}4.8\%} & \textbf{.83} & \yes & \textbf{.83} & \yes \\
Frozen Lake 6x6 & \textbf{.84} & \no & \textbf{.84} & {\color{YellowOrange}26.2\%} & \textbf{.74} & {\color{YellowOrange}21.6\%} & \textbf{.74} & {\color{YellowOrange}10.8\%} & \textbf{.70} & {\color{YellowOrange}5.7\%} & \textbf{.70} & {\color{YellowOrange}5.7\%} \\
Frozen Lake 7x7 & .91 & {\color{YellowOrange}31.9\%} & \textbf{.88} & {\color{YellowOrange}15.9\%} & \textbf{.87} & {\color{YellowOrange}16.1\%} & .88 & {\color{YellowOrange}15.9\%} & \textbf{.87} & {\color{YellowOrange}14.9\%} & .88 & {\color{YellowOrange}15.9\%} \\
Frozen Lake 8x8 & .91 & {\color{YellowOrange}50.5\%} & \textbf{.90} & {\color{YellowOrange}31.1\%} & \textbf{.89} & {\color{YellowOrange}34.8\%} & .90 & {\color{YellowOrange}25.6\%} & \textbf{.84} & {\color{YellowOrange}23.8\%} & \textbf{.84} & {\color{YellowOrange}20.2\%} \\
Taxi & \textbf{21.3} & \no & \textbf{21.3} & {\color{YellowOrange}65.4\%} & 21.3 & \no & \textbf{17.2} & {\color{YellowOrange}57.2\%} & 21.3 & \no & \textbf{12.5} & \yes \\
\bottomrule
\end{tabular}
\caption{Comparison between KAPS and the exact MILP method when solving uncertain MDPs for a single robust policy. KAPS achieves high-quality solutions and bounds within 1 second of runtime and eventually proves optimality for most scenarios, except larger Disaster Rescue and Frozen Lake instances. Given 300 seconds of runtime, MILP achieves similar scores across many scenarios but performs significantly worse at proving optimality.}\label{tab:single-policy-comparisons-appendix}
\end{table*}